\documentclass{article}

\usepackage[preprint]{neurips_2026}

\usepackage[utf8]{inputenc} 
\usepackage[T1]{fontenc}    
\usepackage{hyperref}       
\usepackage{url}            
\usepackage{booktabs}       
\usepackage{amsfonts}       
\usepackage{nicefrac}       
\usepackage{microtype}      

\usepackage{graphicx}
\usepackage{subfigure}
\usepackage{multirow} 
\usepackage{makecell}
\usepackage{bbding}
\usepackage{pifont}
\usepackage[table]{xcolor}
\usepackage{stfloats}
\usepackage{mdframed}
\usepackage{mathtools}

\usepackage{amsmath}
\usepackage{amsthm}
\usepackage{algorithm}
\usepackage{algorithmic}

\newtheorem{theorem}{Theorem}

\title{DiPO: Disentangled Perplexity Policy Optimization for Fine-grained Exploration-Exploitation Trade-Off}

\author{%
 \makecell{Xiaofan Li$^{1,2}$, Ming Yang$^2$\thanks{Equan Contribution}, Zhiyuan Ma$^2$, Shichao Ma$^2$, Jintao Du$^2$, Yu Cheng$^2$, \\ Weiqiang Wang$^2$, Zhizhong Zhang$^1$, Xin Tan$^1$, Yanyun Qu$^4$, Lizhuang Ma$^1$, Yuan Xie$^{1,3}$\thanks{Corresponding Author}}  \\
  $^1$East China Normal University, $^2$Ant Group \\
  $^3$Shanghai Innovation Institute, $^4$Xiamen University\\
  \texttt{lxfunzi@stu.ecnu.edu.cn},
  \texttt{\{lzma, yxie\}@cs.ecnu.edu.cn}
}

\begin{document}

\maketitle

\begin{abstract}
  Reinforcement Learning with Verifiable Rewards (RLVR) has catalyzed significant advances in the reasoning capabilities of Large Language Models (LLMs). However, effectively managing the exploration and exploitation trade-off remains a critical challenge. In this paper, we fully analyze the exploration and exploitation dilemma of extremely hard and easy samples during the training and propose a new fine-grained trade-off mechanism. Concretely, we introduce a perplexity space disentangling strategy that divides the sample space into distinct exploration (high perplexity) and exploitation (low perplexity) subspaces, thereby mining fine-grained samples requiring exploration-exploitation trade-off. Subsequently, we propose a bidirectional reward allocation mechanism with a minimum impact on verification rewards to implement perplexity-guided exploration and exploitation, enabling more stable policy optimization. Finally, we have evaluated our method on two mainstream tasks: mathematical reasoning and function calling, and experimental results demonstrate the superiority of the proposed method, confirming its effectiveness in enhancing LLM performance by fine-grained exploration-exploitation trade-off.

\end{abstract}

\section{Introduction}


Exploration-Exploitation Trade-Off (EETO) \cite{multiarmed} represents a core challenge in RL for LLM post-training, where error samples stand to benefit from further exploration, while correct samples are suitable for exploitation. In this paper, we propose the two dilemmas of EETO in GRPO-based methods. \textbf{First dilemma: the degradation of the advantage of extreme samples.} As illustrated in Figure \ref{fig: intro} (a), a large proportions of sample groups fall into either the hard group (with uniform zero rewards) or the easy group (with uniform one rewards) during training. These extreme groups yield a zero advantage, leading to the lack of training gradients and thus exposes the GRPO paradigm to an dilemma of ETTO, where the hard and easy groups lack signals for exploration and exploitation, respectively. \textbf{Second dilemma: ineffective exploration and exploitation.} As shown in Figure \ref{fig: intro} (b), the distribution of perplexity (PPL) reflects the exploratory (high PPL) or exploitative (low PPL) tendency of samples, where some error samples exhibit an exploitative tendency, while some correct samples show an exploratory one. This ineffective exploration and exploitation severely decrease the stability and plasticity of RL.

\begin{figure}[t]
  \centering
   \includegraphics[width=1.0\linewidth]{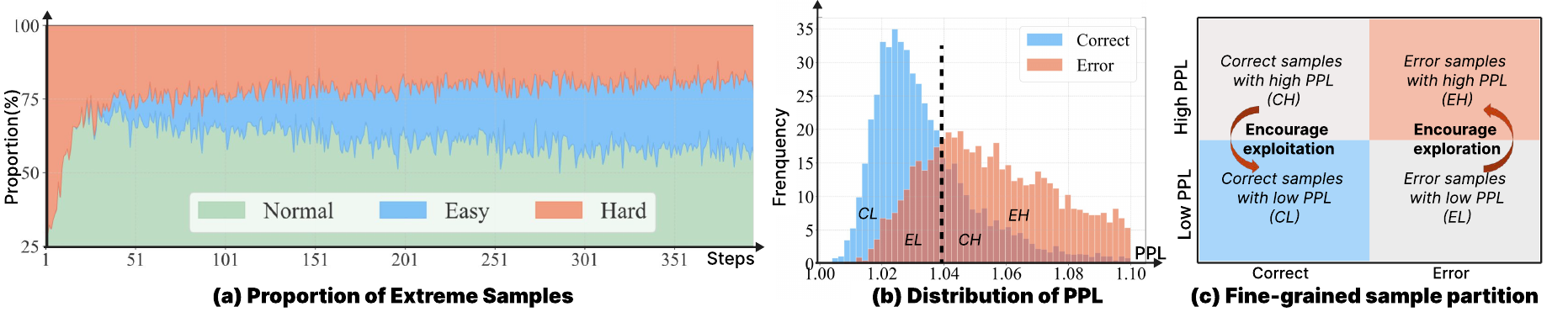}
    
   \caption{\textbf{(a)} The proportion of Easy/Normal/Hard groups in each step during the DAPO training. \textbf{(b)} The PPL distribution of correct and error samples in the validation set at 300th steps of DAPO training. \textbf{(c)} Illustration of four samples after PSD fine-grained partitioning.} 
   \label{fig: intro}
\end{figure}

Recent approaches have investigated the EETO by introducing PPL reward shaping. DACE \cite{DACE} encourages all samples either to explore or to exploit based on intra-group accuracy. However, this coarse-grained partitioning would hinder policy optimization during early training stages when most samples belong to the hard group. Meanwhile, CDE \cite{CDE} designs multiple weighting mechanisms to employ exploration rewards at appropriate times, yet it does not explicitly address exploitation and introduces more hyper-parameters. Moreover, both CDE and DACE directly use perplexity as a reward bias, which may introduce additional uncertainty. In this paper, we propose a novel method \textbf{Di}sentangled Perplexity \textbf{P}olicy \textbf{O}ptimization (\textbf{DiPO}) to solve the two dilemmas and achieve a fine-grained EETO.

To enable fine-grained sample mining for exploratory and exploitative purposes, we propose the Perplexity Space Disentangling (PSD) strategy, that integrates statistical probability of correctness (verification reward) and PPL. In Figure \ref{fig: intro} (b), error samples typically exhibit higher PPL, whereas correct samples generally maintain lower PPL. PSD infers the optimal threshold $\tau*$ by advantage judgment and minimizing classification errors, thus capturing the inherent correlation between PPL and sample correctness. Subsequently, as shown in Figure \ref{fig: intro} (c), the entire sample space is partitioned into four fine-grained quadrants: correct samples with high PPL (CH), correct samples with low PPL (CL), error samples with high PPL (EH), and error samples with low PPL (EL). While the CL and EH samples naturally align with effective exploitation and exploration; The critical of EETO lies in encouraging exploitation for CH samples and exploration for EL samples.

In addition, the uncertainty of PPL distribution results in a significant discrepancy between the PPL reward and the verification reward. To achieve stable RL training for EETO, we abandon the direct adoption of PPL for reward shaping and proposes a Bidirectional Reward Reallocation (BRR) mechanism. Specifically, on the one hand, to avoid interfering with the policy optimization guided by the verification reward, BRR is only performed on the zero-gradient easy and hard groups. On the other hand, because the verification reward variance of the easy and hard groups is zero, we introduce a maximum-PPL reward reallocation strategy, where the rewards corresponding to the maximum PPL samples in the easy and hard groups are set to 0 and 1, respectively, ensuring the variance of the reallocated reward distribution is close to zero. The proposed BRR incorporates the PPL signal while minimizing the perturbation to the original reward distribution, thus realizing more stable training.

To summarize, the main contributions of this paper are:
\begin{itemize}
  \item We propose a perplexity space disentanglement strategy that partitions the PPL space based on perplexity and correctness distributions, enabling a fine-grained EETO.
  \item We design a bidirectional reward reallocation mechanism that stabilizes training by reallocating rewards with minimal perturbation to the verification reward distribution.”
  \item For mathematical reasoning and function calling tasks, our method achieves superior results confirming its effectiveness in reasoning enhancement for LLMs.

\end{itemize}

\begin{figure*}[t]
  \centering
   \includegraphics[width=1.0\linewidth]{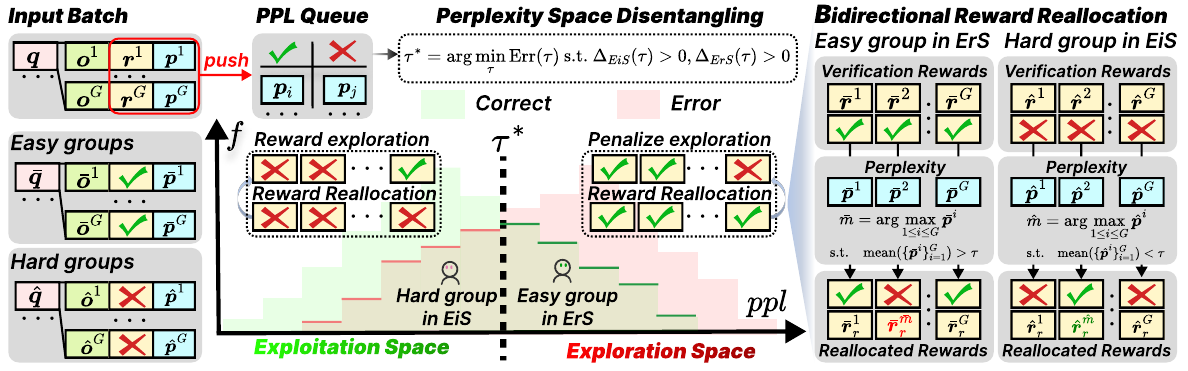}
    
   \caption{Illustration of DiPO, consisting of three modules: PPL Queue, Perplexity Space Disentangling (PSD), and Bidirectional Reward Reallocation (BRR). Specifically, the PPL Queue caches PPL items; PSD is used for fine-grained sample partition; and then BRR performs reward allocation.}
   \label{method}
\end{figure*}

\section{Preliminaries and Definitions}
\subsection{RLVR Algorithms}
\textbf{Group Relative Policy Optimization (GRPO)} introduces a concise advantage calculation method. For each query $\boldsymbol{q}$ and its ground-truth answer $\boldsymbol{a}$, given the verifiable reward calculation function:
\begin{equation}
    \begin{aligned}
    \mathcal{R}(\boldsymbol{o}^i, \boldsymbol{a}) = 1 \quad \texttt{if is\_equivalent}(\boldsymbol{a}, \boldsymbol{o}^i) \ \texttt{else} \quad 0,
    \end{aligned}
\label{eq: reward function}
\end{equation}
where $\{\boldsymbol{o}^{i}\}_{i=1}^{G}$ is generated by rollout policy $\pi_{\boldsymbol{\theta_{\text{old}}}}(\boldsymbol{o}|\boldsymbol{q})$ through $G$-sampling. The estimated advantage $\hat{\boldsymbol{A}}^i_{t}$ under $\mathcal{R}$ is then computed as:
\begin{equation}
    \begin{aligned}
    \hat{\boldsymbol{A}}^i_{t}(\mathcal{R}) = \frac{\mathcal{R}(\boldsymbol{o}^i,\boldsymbol{a}) - \text{mean}(\{\mathcal{R}(\boldsymbol{o}^i,\boldsymbol{a})\}_{i=1}^{G})}{\text{std}(\{\mathcal{R}(\boldsymbol{o}^i,\boldsymbol{a})\}_{i=1}^{N})}.
    \end{aligned}
\label{eq: grpo advantage}
\end{equation}
When G-sampling rewards are all 0 or 1, all advantages come to 0, indicating the sampled group consists solely of {hard groups} and {easy groups}.



\textbf{Dynamic Sampling Policy Optimization (DAPO)} introduces new improvements based on GRPO The maximum optimization objective under $\mathcal{R}$ is as follows:
\begin{equation}
    \resizebox{1\hsize}{!}{$
    \begin{aligned}
         \mathcal{J}_{\text{DAPO}}(\theta, \mathcal{R}) = \mathbb{E}_{(\boldsymbol{q}, \boldsymbol{a}) \sim \mathcal{D}, \{\boldsymbol{o}^{i}\}^G_{i=1} \sim \pi_{{\boldsymbol{\theta}}_{\text{old}}}(\boldsymbol{o|\boldsymbol{q}})} 
         \frac{1}{\sum_{i=1}^G |\boldsymbol{o}^{i}|}  \sum_{i=1}^{G} \sum_{t=1}^{|\boldsymbol{o}^{i}|}
        \Bigg[
        \min \Big( 
         \boldsymbol{s}_{t}^i \hat{\boldsymbol{A}}_{t}^i(\mathcal{R}),\, \text{clip}\big(\boldsymbol{s}_{t}^i, 1 - \epsilon_{\text{low}}, 1 + \epsilon_{\text{high}} \big) \hat{\boldsymbol{A}}_{t}^i(\mathcal{R}) 
        \Big) \Bigg], \\
        \text{s.t.} \ \ \ 0 < | \{\mathcal{R}(\boldsymbol{o}^i, \boldsymbol{a}) | \mathcal{R}(\boldsymbol{o}^{i}, \boldsymbol{a}) = 1 )\} | < G
    \end{aligned}
    $}
    \label{eq: dapo}
\end{equation}
where $\quad \boldsymbol{s}_t^i = \frac{\pi_{\boldsymbol{\theta}}(\boldsymbol{o}_t^i | \boldsymbol{q}, \boldsymbol{o}_{<t}^i)}{\pi_{\boldsymbol{\theta}_{\text{old}}}(\boldsymbol{o}_t^i | \boldsymbol{q}, \boldsymbol{o}_{<t}^i)}$ is the ratio of importance sampling. In this paper, we use advanced DAPO as our baseline without {overlong reward shaping}, as in some cases, overlong reward shaping would damage the model's performance. 

\subsection{Perplexity}
Perplexity (shorted as PPL) in Large Language Models is a metric that quantifies how confidently a probability model predicts a sample. Mathematically, given a model $\pi_{\boldsymbol{\theta}}$, a question $\boldsymbol{q}$, and a response $\boldsymbol{o}^i = (\boldsymbol{o}^i_1, \boldsymbol{o}^i_2, ..., \boldsymbol{o}^i_T)$ generated by $\pi_{\boldsymbol{\theta}}$, the perplexity is given by:
\begin{equation}
    \boldsymbol{p}^i = \exp\left(-\frac{1}{T}\sum_{t=1}^{T} \log \pi_{\theta}(\boldsymbol{o}^i_t|\boldsymbol{q}, \boldsymbol{o}_{<t})\right)
\label{eq: ppl}
\end{equation}
For a response, a low PPL indicates that the trajectory tends to \emph{exploitation}, while higher PPL indicates that the trajectory tends to \emph{exploration} \cite{CDE, pplcrl}.

\section{Methodology}
\subsection{Overview}

As illustrated in Figure \ref{method}, the proposed DiPO mainly contains two modules: {1) Perplexity Space Disentangling (PSD)}; {2) Bidirectional reward reallocation (BRR)}. PSD calculates conditional probability distribution based on the PPL Queue cached with PPL-reward pairs, and then determines the optimal threshold by the advantage judgment and minimizing classification errors. We term higher and lower PPL region exploration space (ErS) and exploitation space (EiS), respectively. Through PPL disentanglement, the ineffective samples for exploration and exploitation, i.e., hard groups in EiS and Easy groups in ErS are selected for policy optimization. Subsequently, BRR is implemented for these samples; for the hard group in EiS, the reward for maximum PPL sample is set to 1, thus encouraging exploration; the easy group in ErS, the reward for maximum PPL sample is set to zero, thus penalizing exploration. BRR incorporates the PPL signal and minimizes perturbations to the verification reward, enabling more stable training.

\subsection{Perplexity Space Disentangling}
PSD calculates the optimal threshold $\tau^*$ for dividing EiS and ErS by introducing the conditional probability distribution $\mathrm{Pr}(R|P)$ where $P$ and $R$ represent PPL and reward. For ETTO, error samples stand to benefit from further exploration, while correct samples are suitable for exploitation. Thus the optimal EiS and ErS should have the highest relevance to sample correctness, where samples falls in EiS are considered more likely to be correct, whereas those in ErS are generally deemed error. 

\textbf{Online Statistical Estimation.} Since the PPL distribution induced by the policy changes dynamically during training, direct estimation of conditional probabilities may suffer from an unfavorable variance-bias trade-off. To stabilize the estimation, we maintain a {PPL queue} $\mathcal{Q}$ that caches samples from the most recent two batches:
\(\mathcal{Q}=\{(\boldsymbol{p}_i,\boldsymbol{r}_i)\}_{i=1}^{M},\)
where $\boldsymbol{p}_i\in [1, +\infty)$ denotes the perplexity (PPL) of the $i$-th sample and $\boldsymbol{r}_i\in\{0,1\}$ denotes its corresponding validation reward.

For a given threshold $\tau$, we estimate the empirical conditional probabilities of the reward given whether the PPL is below or above $\tau$ as
\begin{equation}
\label{eq:conditional_probs}
\resizebox{.8\hsize}{!}{$
\begin{aligned}
\widehat{\Pr}(R=1 \mid P<\tau) &= \frac{\sum_{i=1}^{M} \mathbb{I}(p_i<\tau,\, r_i=1)}{\sum_{i=1}^{M} \mathbb{I}(p_i<\tau)}, \ \
\widehat{\Pr}(R=0 \mid P<\tau) &= \frac{\sum_{i=1}^{M} \mathbb{I}(p_i<\tau,\, r_i=0)}{\sum_{i=1}^{M} \mathbb{I}(p_i<\tau)}, \\
\widehat{\Pr}(R=1 \mid P>\tau) &= \frac{\sum_{i=1}^{M} \mathbb{I}(p_i>\tau,\, r_i=1)}{\sum_{i=1}^{M} \mathbb{I}(p_i>\tau)}, \ \
\widehat{\Pr}(R=0 \mid P>\tau) &= \frac{\sum_{i=1}^{M} \mathbb{I}(p_i>\tau,\, r_i=0)}{\sum_{i=1}^{M} \mathbb{I}(p_i>\tau)} .
\end{aligned}
$}
\end{equation}
where $\mathbb{I}(\cdot)$ denotes the indicator function.

To quantify the statistical uncertainty of each estimate, we compute $95\%$ confidence intervals using the Normal (Wald) approximation \cite{wald}. For a generic empirical probability estimate $\hat{p}=m/n$, where $m$ denotes the number of successes among $n$ samples, the standard error is
\(\mathrm{SE}(\hat{p})=
\sqrt{\hat{p}(1-\hat{p})/n},
\)
and the corresponding $95\%$ confidence interval is
\begin{equation}
\label{eq:wald_ci}
\left[\,\hat{p}-1.96\cdot \mathrm{SE}(\hat{p}),\;\hat{p}+1.96\cdot \mathrm{SE}(\hat{p})\,\right].
\end{equation}
These estimated probabilities represent the sample likelihood of being correct or error in the \textit{Exploitation Space} (EiS, $P<\tau$) and \textit{Exploration Space} (ErS, $P>\tau$) demarcated by the threshold $\tau$.

\textbf{Advantage Judgment.}
During the early RL stage of pre-trained models, the correlation between PPL and correctness may not be positively correlated (see Appendix \ref{sec: app ppl}), where directly activating PSD would hinder optimization. To this end, we introduce a advantage judgment mechanism based on the results of online statistical estimation.

Specifically, for a given threshold $\tau$, we define two advantage functions that characterize the separation between correct samples and error samples in divided PPL regions:
\begin{equation}
\label{eq:correlation_gap}
\begin{aligned}
\Delta_{\mathrm{EiS}}(\tau) 
&= \Pr(R=1 \mid P<\tau) - \Pr(R=0 \mid P<\tau), \\
\Delta_{\mathrm{ErS}}(\tau) 
&= \Pr(R=0 \mid P>\tau) - \Pr(R=1 \mid P>\tau),
\end{aligned}
\end{equation}
where $\Delta_{\mathrm{EiS}}(\tau)$ measures the \emph{correctness advantage} in the EiS, and $\Delta_{\mathrm{ErS}}(\tau)$ measures the \emph{error advantage} in the ErS.

To reduce the impact of sampling randomness and ensure robust judgment, we conservatively evaluate these gaps using the boundary values of the $95\%$ confidence intervals derived in the online estimation stage ({Equation (\ref{eq:wald_ci})). Concretely, the two judgment functions are as follows:
\begin{equation}
\label{eq:ci_gap}
\Delta_{\mathrm{EiS}}(\tau) = L_1 - U_2\quad \text{and} \quad
\Delta_{\mathrm{ErS}}(\tau) = L_4 - U_3,
\end{equation}
where $[L_1, U_1]$ and $[L_2, U_2]$ denote the $95\%$ confidence intervals of $\Pr(R=1 \mid P<\tau)$ and $\Pr(R=0 \mid P<\tau)$, respectively, $[L_3, U_3]$, $[L_4, U_4]$ correspond to $\Pr(R=1 \mid P>\tau)$ and $\Pr(R=0 \mid P>\tau)$. Considering the PPL space to be meaningfully correlated with correctness at threshold $\tau$ only when both conditions
\(\Delta_{\mathrm{EiS}}(\tau) > 0\) and \(\Delta_{\mathrm{ErS}}(\tau) > 0\) hold.

\textbf{Minimizing Classification Errors.} In the middle and later RL stages, a wide range of thresholds may satisfy Advantage Judgment, necessitating the further selection of an optimal value. We formalize a classification task using PPL as the criterion: responses with a PPL below $\tau$ are classified as correct, while those above are deemed error. By minimizing classification errors ($\text{Err}(\tau)$ in Figure \ref{method}) to find the optimal threshold:
\begin{equation}
\label{eq: opt_t}
\begin{aligned}
& \tau* = \arg\min_{\tau} \frac{1}{|\mathcal{Q}|} \sum_{\scriptsize{(\boldsymbol{r}_i, \boldsymbol{p}_i) \in \mathcal{Q}}} \left| \boldsymbol{r}_i - \mathbb{I}(\boldsymbol{p}_i < \tau) \right| \\
& \text{s.t.} \ \ \Delta_{EiS}(\tau)>0, \ \ \Delta_{ErS}(\tau)>0
\end{aligned}
\end{equation}
where $\mathbb{I}(\cdot)$ is the indicator function. Through advantage judgment and minimizing classification errors, the disentangled PPL spaces are highly correlated with correctness. Consequently, the hard groups in EiS and the easy groups in ErS would be the important samples for achieving the EETO. Algorithm details are illustrated in Algorithm \ref{alg: psd}.

\subsection{Bidirectional Reward Reallocation}
\label{sec: brr}
To implement a stable exploration-exploitation optimization and minimize the impact to verification rewards, BRR does not directly employ PPL as the basis for reward shaping, but introduces the maximum-PPL reward reallocation strategy. BRR thus aims to drive updates toward the exploratory (high-entropy) direction for the hard groups in EiS, and toward the exploitative (low-entropy) direction for the easy groups in ErS..
 
\begin{theorem}[Entropy Increase with Maximum-PPL Reward]
\label{thm:grpo-entropy-increase}
Let $\pi_t$ be the policy at step $t$. Given a query $\boldsymbol{q}$ and a group of outputs $\{\boldsymbol{o}^i\} \sim \pi_t(\cdot \mid \boldsymbol{q})$, if the output with the maximum PPL among $\{\boldsymbol{o}^i\}$ is assigned a reward, then the average entropy of the updated policy $\pi_{t+1}(\cdot \mid \boldsymbol{q})$ increases.
\end{theorem}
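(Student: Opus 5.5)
I would prove the claim with the standard first-order entropy-dynamics argument for softmax policies trained by policy gradient, as used in the entropy-mechanism analyses of RLVR. The idea is to linearise the change in entropy after one update.

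\textbf{Setting.} I work in the tabular softmax parameterisation of $\pi_t(\cdot\mid\boldsymbol{q})$ with logits $z$. The update is one natural/vanilla policy-gradient step of size $\eta$ with per-output advantages $A(\boldsymbol{o})$. To first order in $\eta$, the logit change is $\Delta z(\boldsymbol{o}) = \eta\, \pi_t(\boldsymbol{o})A(\boldsymbol{o})$ (vanilla) or $\eta A(\boldsymbol{o})$ (natural). For the natural case, differentiating $\mathcal{H}(\pi)=-\mathbb{E}_\pi[\log\pi]$ along this direction gives the known identity
\begin{equation*}
\mathcal{H}(\pi_{t+1})-\mathcal{H}(\pi_t) \approx -\eta\,\mathrm{Cov}_{\boldsymbol{o}\sim\pi_t}\big(\log\pi_t(\boldsymbol{o}),\,A(\boldsymbol{o})\big).
\end{equation*}
The theorem therefore reduces to showing that this covariance is negative. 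I will state the result per token, averaging over states, which is how I interpret ``average entropy'' in the statement. The per-token identity is the same computation, with $\log\pi_t(\boldsymbol{o}_t\mid\cdot)$ in place of $\log\pi_t(\boldsymbol{o})$.

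\textbf{Advantages after reallocation.} Next I compute the advantages produced by the reallocation. The theorem concerns the hard group in EiS, where every verification reward is $0$. After BRR, the maximum-PPL output $\boldsymbol{o}^{k}$ has reward $1$ and the other $G-1$ outputs have reward $0$. The group mean is $1/G$ and the standard deviation is $\sigma=\sqrt{(G-1)}/G$, so by Equation~(\ref{eq: grpo advantage}):
\begin{itemize}
\item $A(\boldsymbol{o}^k)=(1-1/G)/\sigma>0$;
\item $A(\boldsymbol{o}^i)=-(1/G)/\sigma<0$ for $i\neq k$.
\end{itemize}
The advantage thus has the form $c(\mathbb{I}[i=k]-1/G)$ with $c>0$. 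Substituting the sample version of the covariance over the group gives
\begin{equation*}
\widehat{\mathrm{Cov}} = \frac{c}{G}\Big(\ell_k-\bar\ell\Big),\qquad \ell_i:=\tfrac{1}{|\boldsymbol{o}^i|}\log\pi_t(\boldsymbol{o}^i),
\end{equation*}
where I use the length-normalised log-likelihood, consistent with token-level averaging in DAPO. Since $\boldsymbol{p}^i=\exp(-\ell_i)$ by Equation~(\ref{eq: ppl}), the maximum-PPL sample is exactly the one minimising $\ell_i$. Hence $\ell_k\le\bar\ell$, with strict inequality unless all PPLs are equal. The covariance is negative, so the entropy change is positive.

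\textbf{Main obstacle.} The delicate step is justifying that the group sample covariance, using sequence-level PPL, correctly represents the token-level covariance that governs the entropy dynamics. Three mismatches need to be bridged:
\begin{itemize}
\item the linearisation, which holds only for small $\eta$ with PPO clipping inactive near the on-policy ratio $s=1$;
\item the sample estimate versus the expectation under $\pi_t$;
\item the sequence-level versus the per-token log-probability.
\end{itemize}
My first approach is to state the result in expectation over the token-level update with a shared advantage per sequence. Under that shared advantage, the token-level covariance summed over positions equals $\sum_i A_i\sum_t(\log\pi_t(\boldsymbol{o}^i_t)-\text{baseline})$. This is precisely $|\boldsymbol{o}^i|$ times the centred $\ell_i$, up to length weighting, which DAPO's $1/\sum|\boldsymbol{o}^i|$ normalisation absorbs. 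The remaining terms from non-sampled tokens and higher order in $\eta$ I would treat as $O(\eta^2)$ and state the theorem to first order.

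The symmetric case, the easy group in ErS with the maximum-PPL reward set to $0$, follows by flipping the sign of $c$ and gives an entropy decrease.
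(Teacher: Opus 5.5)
The gap is in the step that turns your covariance identity into a statement about PPL.

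At the sequence level the identity is fine. For a tabular softmax over whole responses, the sampled natural-gradient step with zero-sum advantages does give $-\frac{\eta}{G}\sum_i A_i\log\pi_t(o^i)$ to first order, so the sample-versus-expectation worry is not the issue. But that formula involves the total log-likelihood $\log\pi_t(o^i)=|o^i|\ell_i$. When lengths differ, the maximum-PPL output need not have below-average total log-likelihood.

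Your token-level repair does not work as stated. At state $s^i_t$ the natural-gradient step changes that state's entropy by a positive multiple of $-A_i\bigl(\log\pi_t(o^i_t\mid s^i_t)+H^i_t\bigr)$. The centring term is the entropy of that particular state, not a baseline shared by the group. With DAPO's token normalisation and the paper's $H_{\mathrm{avg}}$, this aggregates to $\Delta H_{\mathrm{avg}}\propto-\sum_iA_i(\ell_i+\bar H_i)$, where $\bar H_i=\frac{1}{|o^i|}\sum_tH^i_t$. The relation $\sum_iA_i=0$ only removes constants common to the whole group, so the response-specific $\bar H_i$ survives. The sum is therefore not ``$|o^i|$ times the centred $\ell_i$''. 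Its sign is set by the excess surprisal $\ell_k+\bar H_k$ relative to the group mean, which maximal PPL does not control.

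For example, take $G=2$. Let $o^1$ visit only states where $\pi_t$ is uniform on $7$ tokens, so $\ell_1=-\bar H_1=-\ln 7$. Let $o^2$ visit only two-token states with probabilities $(1-e^{-3/2},e^{-3/2})$ and always draw the rare token, so $\ell_2=-3/2$ and $\bar H_2\approx 0.53$. Then $o^1$ has the larger PPL ($7>e^{3/2}$), yet $\sum_iA_i(\ell_i+\bar H_i)\approx\tfrac{c}{2}(0+0.97)>0$. Your own formula therefore predicts an entropy \emph{decrease}: boosting a token in a uniform state does nothing to first order, while suppressing the rare token sharpens the skewed state.

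The paper meets the same term and handles it by explicit approximation rather than by an identity. It takes the vanilla per-token logit step $\Delta l^i_t(y)=\eta\hat A^i_t(\mathbf{1}_{y=o^i_t}-\pi^i_t(y))$. It writes the first-order change as $-\eta\hat A^i_t(B^i_t+F^i_t)$, with $B^i_t=\pi^i_t(o^i_t)\bigl(\log\pi^i_t(o^i_t)+H^i_t\bigr)$ and $F^i_t$ depending only on the state's distribution. It then \emph{assumes} that differences of $F$ across responses are negligible. Finally, it argues that $B$ increases with $\pi^i_t(o^i_t)$ when $\pi^i_t(o^i_t)>e^{-1-H^i_t}$, so the maximum-PPL response contributes the smallest $B$.

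To close your argument you need the analogous hypothesis, stated as an assumption. That could be that the $\bar H_i$ are comparable within a group, or, directly, that the maximum-PPL response has the smallest excess surprisal. With it, your natural-gradient route is tidier than the paper's, since $\log\pi+H$ is monotone in $\pi$ without any $e^{-1-H}$ condition. However, only the natural step yields the $\mathrm{Cov}(\log\pi,A)$ form you use. The vanilla step you also list gives $-\eta\,\mathrm{Cov}(\log\pi,\pi A)$, whose sampled per-token version is exactly the paper's $B+F$ expression, and with it the monotonicity condition comes back.
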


\textbf{Reward Reallocation for Hard Groups.} As shown in Figure \ref{method}, we define a hard group as $\hat{\boldsymbol{q}}$, $\{\hat{\boldsymbol{o}}^i\}$, $\{\hat{\boldsymbol{r}}^i\}$, $\{\hat{\boldsymbol{p}}^i\}$, here $\{ \hat{\boldsymbol{r}}^i \}=\{ 0 \}$. If ${mean}(\{\hat{\boldsymbol{p}}^i\}) < \tau*$ means that this hard group is located in EiS, the goal of the reward reallocation is to update it towards the high entropy direction. Concretely, the index $\hat{m}$ corresponding to the maximum PPL in $\{\hat{\boldsymbol{o}}^i\}$ is identified: $\hat{m} = \arg\max_{1 \leq i \leq G} \hat{\boldsymbol{p}}^i$. Subsequently, the reallocated reward $\{\hat{\boldsymbol{r}}^i_r\}$ is constructed by setting $\hat{\boldsymbol{r}}^m_r=1$ while preserving all other original values, such that $\hat{\boldsymbol{r}}^i_r=\hat{\boldsymbol{r}}^i$ for all $i \not= \hat{m}$. Noted that if $\tau*$ does not exist, or ${mean}(\{\hat{\boldsymbol{p}}^i\}) > \tau*$, reallocated reward remains unchanged, \text{i.e.}, $\{\hat{\boldsymbol{r}}^i_r\}=\{\hat{\boldsymbol{r}}^i\}=\{ 0 \}$. 

\begin{theorem}[Entropy decrease with Maximum-PPL penalty]
\label{thm:grpo-entropy-increase}
Let $\pi_t$ be the policy at step $t$. Given a query $\boldsymbol{q}$ and a group of outputs $\{\boldsymbol{o}^i\} \sim \pi_t(\cdot \mid \boldsymbol{q})$, if the output with the maximum PPL among $\{\boldsymbol{o}^i\}$ is assigned a penalty, then the average entropy of the updated policy $\pi_{t+1}(\cdot \mid \boldsymbol{q})$ decreases.
\end{theorem}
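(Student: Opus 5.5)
We will follow the standard first-order analysis of entropy dynamics under softmax policy-gradient updates, treating the policy at each decoding state as a softmax over logits $z$. The argument is the mirror image of the preceding entropy-increase theorem.

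\textbf{Step 1: covariance identity.} We first recall the known identity for a tabular softmax policy $\pi(a)\propto e^{z_a}$ updated by $z_a \leftarrow z_a + \eta\, \pi(a) A(a)$, which is the natural-gradient form; the vanilla form gives a similar expression. To first order in $\eta$,
\begin{equation*}
H(\pi_{t+1}) - H(\pi_t) \approx -\eta\, \mathrm{Cov}_{a\sim\pi_t}\bigl(\log \pi_t(a),\, \pi_t(a)A(a)\bigr).
\end{equation*}
We apply this token-wise along each sampled trajectory and average over states to obtain the change in average entropy. The sign is therefore governed by the covariance between the log-probabilities and the advantages of the sampled actions.

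\textbf{Step 2: advantages after reallocation.} Consider an easy group, with all rewards equal to $1$. After the maximum-PPL penalty, $r^{\hat m}=0$ and $r^i=1$ for $i\neq \hat m$. With group normalization, Equation (\ref{eq: grpo advantage}) gives $A^{\hat m} = -c(G-1)$ and $A^i = c$ otherwise, for some constant $c>0$; the advantages sum to zero. Hence the covariance with the sequence-level mean log-probability $\ell^i = -\log p^i$ is
\begin{equation*}
\tfrac{1}{G}\sum_i A^i \ell^i = \tfrac{c}{G}\bigl(\textstyle\sum_{i\neq\hat m}\ell^i - (G-1)\ell^{\hat m}\bigr) = \tfrac{c(G-1)}{G}\bigl(\bar\ell_{-\hat m} - \ell^{\hat m}\bigr).
\end{equation*}
Since $\hat m$ maximizes PPL, it minimizes $\ell$, so this quantity is $\ge 0$, with strict inequality unless all PPLs are tied. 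By Step 1, the entropy change is therefore $\le 0$. In words, the update pushes mass away from the least-confident trajectory and toward the more confident ones.

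\textbf{Step 3: main obstacle.} The hard step is bridging the sequence-level covariance of Step 2 and the token-level covariance required by Step 1. Here $\ell^i$ is the average token log-probability, and each token's advantage equals its sequence's advantage. Our first attempt will rewrite the token-level covariance as an average over sampled tokens: $\sum_i\sum_t A^i \log\pi(o^i_t)$ regroups to $\sum_i A^i |o^i|\,\ell^i$. This yields exactly the same sign conclusion under the paper's token-level normalization, provided lengths are comparable or we weight by $|o^i|$.

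The remaining conditions will be stated explicitly as assumptions:
\begin{itemize}
\item a small step size, so that the first-order approximation holds;
\item the clipped DAPO objective operating in its unclipped regime at the first step, where $s=1$;
\item the empirical group covariance serving as an estimate of the on-policy covariance.
\end{itemize}
Under these assumptions the claimed decrease of average entropy follows.
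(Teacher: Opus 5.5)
\textbf{The gap is in your Step 3, and it starts in Step 1.} The rule $z_a \leftarrow z_a + \eta\,\pi(a)A(a)$ is the exact \emph{vanilla} softmax gradient. The natural-gradient rule is $z_a \leftarrow z_a + \eta A(a)$, which gives $-\eta\,\mathrm{Cov}(\log\pi, A)$. DAPO actually performs the sampled vanilla step $\Delta z_y = \eta\hat A^i(\mathbf{1}_{y=o^i_t} - \pi^i_t(y))$ at each state $(q,o^i_{<t})$. Its first-order entropy effect is
\begin{equation*}
\Delta H^i_t \approx -\eta\,\hat A^i\bigl(B^i_t+F^i_t\bigr),
\end{equation*}
with $B^i_t=\pi^i_t(o^i_t)\bigl(\log\pi^i_t(o^i_t)+H^i_t\bigr)$ and $F^i_t=-\mathrm{Cov}_{y\sim\pi^i_t}\bigl(\log\pi^i_t(y),\pi^i_t(y)\bigr)$. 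This is exactly the paper's decomposition.

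Your regrouping $\sum_i\sum_t \hat A^i\log\pi^i_t(o^i_t)=\sum_i\hat A^i|o^i|\ell^i$ replaces $B^i_t+F^i_t$ by $\log\pi^i_t(o^i_t)$. It drops the factor $\pi^i_t(o^i_t)$, the offset $H^i_t$, and the state-dependent $F^i_t$. It is not even the single-sample estimate of your own Step~1 covariance; that estimate is $\hat A^i B^i_t$. The centering $\sum_i \hat A^i=0$ only cancels terms that are identical across responses, which $H^i_t$ and $F^i_t$ are not. Your third assumption, that the covariance of $(\hat A^i,\ell^i)$ across the $G$ responses stands in for the per-state covariance over the vocabulary, is therefore precisely the unproven step.

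\textbf{The step fails for a concrete reason.} The map $p\mapsto p\log p+pH$ is decreasing for $p<e^{-1-H}$. So having the smallest mean log-probability does not imply having the smallest average $B+F$. Work in the same first-order, per-state model the paper uses.
\begin{itemize}
\item Suppose $o^{\hat m}$ gets its maximal PPL from about $10\%$ of tokens with probability $10^{-6}$, at otherwise near-deterministic states. Then its $B$ and $F$ are all $\approx 0$.
\item Suppose every other response consists of tokens of probability $0.3$ drawn from the distribution $(0.3,0.6,0.1)$. Then $B+F\approx -0.19$.
\end{itemize}
Penalizing $o^{\hat m}$ then \emph{increases} entropy, even though $\sum_i\hat A^i\ell^i>0$.

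The paper obtains the sign only by adding two idealizations, and you would need them too:
\begin{itemize}
\item the difference in average $F$ between $o^{\hat m}$ and the other responses is negligible;
\item sampled tokens satisfy $\pi^i_t(o^i_t)>e^{-1-H^i_t}$, so $B$ increases with token probability.
\end{itemize}
With $\hat A^{\hat m}=-\sqrt{G-1}$ and $\hat A^i=1/\sqrt{G-1}$, this gives $\Delta H_{avg}\approx\frac{\eta\sqrt{G-1}}{G}\bigl(\mathbb{E}[B^{\hat m}]-\mathbb{E}_{i\neq\hat m}[B^i]\bigr)<0$. The role of your Step~2 ordering is then played by monotonicity of $B$, not of $\ell$. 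The length issue you flag is avoided in the paper by the per-response normalization $\frac{1}{G}\sum_i\frac{1}{|o^i|}\sum_t$, used in both the loss and $H_{avg}$.
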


\textbf{Reward Reallocation for Easy Groups.} As shown in Figure \ref{method}, reward reallocation for the easy group is the opposite of that for the hard group. A easy group is defined as $\bar{\boldsymbol{q}}$, $\{\bar{\boldsymbol{o}}^i\}$, $\{\bar{\boldsymbol{r}}^i\}$, $\{\bar{\boldsymbol{p}}^i\}$, here $\{ \bar{\boldsymbol{r}}^i \}=\{ 1 \}$. If ${mean}(\{\bar{\boldsymbol{p}}^i\}) > \tau*$ means that this easy group is located in ErS, the goal of the reward reallocation is to update it towards the low entropy direction. Concretely, the index $\bar{m}$ corresponding to the maximum PPL in $\{\bar{\boldsymbol{o}}^i\}$ is identified: $\bar{m} = \arg\max_{1 \leq i \leq G} \bar{\boldsymbol{p}}^i$. Subsequently, the reallocated reward $\{\bar{\boldsymbol{r}}^i_r\}$ is constructed by setting $\bar{\boldsymbol{r}}^m_r=0$ while preserving all other original values, such that $\bar{\boldsymbol{r}}^i_r=\bar{\boldsymbol{r}}^i$ for all $i \not= \bar{m}$. Noted that if $\tau*$ does not exist, or ${mean}(\{\bar{\boldsymbol{p}}^i\}) < \tau*$, reallocated reward remains unchanged, \text{i.e.}, $\{\bar{\boldsymbol{r}}^i_r\}=\{\bar{\boldsymbol{r}}^i\}=\{ 1 \}$.

Finally, we abstracted the peocess of BRR into a formalized reward function $\mathcal{R}_r$, defined as follows:
\begin{equation}
\mathcal{R}_r(\{\boldsymbol{r}^i\})=\begin{cases}
\{\hat{\boldsymbol{r}}^i_r\} & \text{if } \{\boldsymbol{r}^i\}=\{0\}, \\
\{\bar{\boldsymbol{r}}^i_r\} & \text{if } \{\boldsymbol{r}^i\}=\{1\}, \\
\{0\} & \text{other},\end{cases}
\end{equation}
where BRR only performs reward reallocation for the easy groups and hard groups, and sets all rewards of the normal groups to zero, thus ensuring that the reallocated rewards (denoted as $\mathcal{R}_r$) and verification rewards (denoted as $\mathcal{R}$) are orthogonal. Algorithm details are illustrated in Algorithm \ref{alg: brr}.

\textbf{Policy Optimization.} $\mathcal{R}$ and $\mathcal{R}_r$ are orthogonal, which means that the samples with gradients under the two rewards are orthogonal. Thus, we use a hyper-parameter $\alpha$ to control the loss weight of $\mathcal{R}_r$. The maximum optimization objective is as follows:
\begin{equation}
\mathcal{J}_{DiPO}(\theta) = \mathcal{J}_{DAPO}(\theta, \mathcal{R}) + \alpha \times \mathcal{J}_{DAPO}(\theta, \mathcal{R}_r),
\end{equation}
where $\mathcal{J}_{DAPO}$ is the DAPO optimization objective in Eq. (\ref{eq: dapo}).

\section{Experiment}
\subsection{Experiment Setup}
To comprehensively evaluate the effectiveness of our proposed DiPO, we conduct experiments on two downstream tasks of LLMs: mathematical reasoning and function calling. Detailed configurations are provided below.

\begin{table*}[]
\centering
\caption{Comparison of mathematical reasoning in ACC/mean@8 on 6 mathematics benchmarks. The best and second-best results are respectively marked in \textbf{bold} and underlined. }
\scalebox{0.87}{
\setlength{\tabcolsep}{4mm}
\begin{tabular}{l|c|c|c|c|c|c|c}
\toprule
\textbf{Method} & {\textbf{AIME24}} & {\textbf{AIME25}} & {\textbf{MATH}} & {\textbf{AMC}} & {\textbf{OLY}} & {\textbf{MIN}} & {\textbf{AVG}}       \\ \midrule
\multicolumn{8}{c}{\cellcolor[RGB]{239, 239, 239}Qwen3-4B-Base}   \\ \midrule
Base model     & 9.58     & 3.75     & 48.48  & 31.48 & 25.52 & 24.82 & \cellcolor[RGB]{239, 243, 248}23.94          \\
GRPO           & \underline{26.67} & 23.33    & 85.83  & 60.24 & 53.06 & 44.39 & \cellcolor[RGB]{239, 243, 248}48.92          \\
DAPO           & 26.25    & 23.75    & 86.43  & 61.90 & 53.88 & 44.34 & \cellcolor[RGB]{239, 243, 248}49.43          \\
DAPO w/ EL     & \underline{26.67} & \textbf{24.58}    & \underline{86.78} & \underline{62.95}    & \textbf{54.53} & \underline{44.53}    & \cellcolor[RGB]{239, 243, 248}\underline{50.01}    \\
CDE            & \underline{26.67} & \underline{24.17} & 85.93  & 62.35 & 52.25 & 43.11 & \cellcolor[RGB]{239, 243, 248}49.08          \\ \midrule
\textbf{DiPO (ours)}          & \textbf{29.17}    & \textbf{24.58}    & \textbf{87.00}  & \textbf{63.70} & \underline{54.09}    & \textbf{44.76} & \cellcolor[RGB]{239, 243, 248}\textbf{50.55} \\ \midrule
\multicolumn{8}{c}{\cellcolor[RGB]{239, 239, 239}Qwen3-8B-Base}   \\ \midrule
Base model     & 8.33     & 9.17     & 66.78  & 39.46 & 33.30 & 66.78 & \cellcolor[RGB]{239, 243, 248}37.30          \\
GRPO           & 31.67    & 24.58    & 89.08  & 69.28 & 56.20 & \textbf{48.62} & \cellcolor[RGB]{239, 243, 248}53.24          \\
DAPO           & 30.08    & 25.83    & 89.43  & 69.12 & 56.90 & \underline{48.02} & \cellcolor[RGB]{239, 243, 248}53.23          \\
DAPO w/ EL     & \underline{33.75} & 25.42    & \textbf{89.58}  & \underline{69.87}    & \underline{57.21}    & {47.56}    & \cellcolor[RGB]{239, 243, 248}\underline{53.90}    \\
CDE            & 31.67    & \underline{26.25} & 89.35  & 68.07 & 57.11 & {47.75} & \cellcolor[RGB]{239, 243, 248}53.37          \\ \midrule
\textbf{DiPO (ours)}          & \textbf{35.00}    & \textbf{27.50}    & \underline{89.55} & \textbf{71.23} & \textbf{57.73} & 47.75 & \cellcolor[RGB]{239, 243, 248}\textbf{54.79} \\ \midrule
\multicolumn{8}{c}{\cellcolor[RGB]{239, 239, 239}Qwen2.5-7B}   \\ \midrule
Base model     & 7.08     & 2.08     & 41.53 & 22.74 & 19.28 & 17.19 & \cellcolor[RGB]{239, 243, 248}18.32          \\
GRPO           & \underline{20.42} & 15.42    & 79.15  & 58.43 & 42.42 & 36.95 & \cellcolor[RGB]{239, 243, 248}42.13          \\
DAPO           & \underline{20.42} & \underline{16.67} & 79.08  & \underline{59.94}    & 42.70 & 37.55 & \cellcolor[RGB]{239, 243, 248}\underline{42.73}    \\
DAPO w/ EL     & 20.00    & 14.58    & \underline{79.85} & 58.73 & \underline{43.05}    & \textbf{39.65} & \cellcolor[RGB]{239, 243, 248}42.64          \\
CDE            & 20.00    & 15.00    & 79.00  & 55.87 & 42.94 & 35.94 & \cellcolor[RGB]{239, 243, 248}41.46          \\ \midrule
\textbf{DiPO (ours)}          & \textbf{22.92}    & \textbf{16.67}    & \textbf{80.35}  & \textbf{60.09} & \textbf{43.72} & \underline{37.59}    & \cellcolor[RGB]{239, 243, 248}\textbf{43.56} \\ \bottomrule
\end{tabular}
}
\label{tab: math}
\end{table*}

\textbf{Mathematical Reasoning.} We evaluate DiPO on the mathematical reasoning task using the DAPO-17K \cite{DAPO} dataset for training. Model performance is assessed on five challenging benchmarks: AIME24, AIME25, AMC23 (denoted as AMC), MATH500 \cite{math500} (denoted as MATH) the OE\_TO\_mat\_en\_COMP subset from OlympiadBench \cite{oly}(denoted as OLY), and Minerva (denoted as MIN). These evaluations are designed to comprehensively examine the method's effectiveness across diverse problem difficulties and formats.

We perform a comparative analysis against several strong baselines: 1. GRPO \cite{GRPO}, 2. DAPO \cite{DAPO}, 3. DAPO enhanced with Entropy Loss (DAPO w/ EL) \cite{el}, and 4. CDE \cite{CDE} with PPL reward shaping. All mathematical reasoning experiments are implemented using the VERL \cite{verl} framework. To ensure a rigorous comparison, we employ three different pretrained base models: Qwen3-4B-Base, Qwen3-8B-Base \cite{qwen3}, and Qwen2.5-7B \cite{qwen2.5}. All models utilize consistent training configurations and prompts, as detailed in {Appendix} \ref{sec: app math}.

\textbf{Function Calling.} We establish our experimental setup using the publicly available ToolRL \cite{toolrl} framework as the primary baseline. Furthermore, we implement ToolRL augmented with DAPO (ToolRL+DAPO) for comparison. Since ToolRL does not utilize a binary verification reward, certain modifications were made when implementing TooRL+DiPO.

We conduct comprehensive testing on the BFCLv3 benchmark \cite{bfcl}, which provides a diverse set of function calling scenarios for thorough evaluation. Consistent with the TooRL setup, we use the same model architectures (Qwen2.5-3B-Instruct, Qwen2.5-7B-Instruct) and maintain identical training configurations and datasets across all compared methods, details in see Appendix \ref{sec: app tool}.

\begin{table*}[t!]
\centering
\caption{Comparison of function calling in acc on BFCLv3 benchmark. The best and second-best results are respectively marked in \textbf{bold} and underlined.}
\scalebox{.87}{
\setlength{\tabcolsep}{3mm}
\begin{tabular}{l|c|c|c|c|c|c}
\toprule
\textbf{Method}           & \textbf{Non-Live Acc} & \textbf{Live Acc} & \makecell{\textbf{Multi Turn} \\ \textbf{Acc}} & \makecell{\textbf{Relevance} \\ \textbf{Detection}} & \makecell{\textbf{Irrelevance} \\ \textbf{Detection}} & \textbf{Overall} \\ \midrule
\multicolumn{7}{c}{\cellcolor[RGB]{239, 239, 239}{\color[HTML]{000000} Qwen2.5-3B-Instruct}}   \\ \midrule
Base model               & 42.52                                         & 53.96                                 & 1.00                                        & 44.44                                            & 82.49                                              & \cellcolor[RGB]{239, 243, 248}33.04          \\
SFT400                   & 69.29                                         & 41.40                                 & 0.00                                        & \underline{ 94.44}                                      & 60.14                                              & \cellcolor[RGB]{239, 243, 248}34.08          \\
SFT400+PPO               & 78.29                                         & 58.76                                 & 5.12                                        & \textbf{100.00}                                  & 48.40                                              & \cellcolor[RGB]{239, 243, 248}45.80          \\
SFT400+GRPO              & 76.21                                         & 64.15                                 & 1.75                                        & \underline{ 94.44}                                      & \textbf{58.63}                                     & \cellcolor[RGB]{239, 243, 248}46.42          \\
PPO, Cold Start          & \underline{ 82.42}                                   & 67.78                                 & 4.88                                        & \textbf{100.00}                                  & 18.09                                              & \cellcolor[RGB]{239, 243, 248}51.15          \\
ToolRL+GRPO  & 81.58                                         & \textbf{73.78}                        & 3.75                                        & \textbf{100.00}                                  & 56.44                                              & \cellcolor[RGB]{239, 243, 248}52.98          \\
ToolRL+DAPO & 82.19                                         & 69.43                                 & \underline{ 8.00}                                  & 81.25                                            & \underline{ 57.60}                                        & \cellcolor[RGB]{239, 243, 248}\underline{ 53.21}    \\ \midrule
ToolRL+DiPO  & \textbf{83.42}                                & \underline{ 73.06}                           & \textbf{8.62}                               & \textbf{100.00}                                  & 54.16                                              & \cellcolor[RGB]{239, 243, 248}\textbf{55.03} \\\midrule
\multicolumn{7}{c}{\cellcolor[RGB]{239, 239, 239}Qwen2.5-7B-Instruct}           \\ \midrule
Base model      & 66.02                                         & 53.51                                 & 4.25                                        & 76.47                                            & 62.66                                              & \cellcolor[RGB]{239, 243, 248}41.97          \\
SFT400          & 69.29                                         & 41.40                                 & 0.00                                        & \underline{ 94.44}                                      & 8.11                                               & \cellcolor[RGB]{239, 243, 248}34.08          \\
SFT400+PPO      & 83.90                                         & 51.84                                 & 0.25                                        & \textbf{100.00}                                  & 29.66                                              & \cellcolor[RGB]{239, 243, 248}42.02          \\
SFT400+GRPO     & 80.69                                         & 46.51                                 & 0.25                                        & \textbf{100.00}                                  & 14.19                                              & \cellcolor[RGB]{239, 243, 248}39.25          \\
PPO, Cold Start & 79.33                                         & 63.17                                 & 0.38                                        & 88.89                                            & 52.92                                              & \cellcolor[RGB]{239, 243, 248}46.68          \\
ToolRL+GRPO     & 86.17                                         & 74.90                                 & 18.12                                       & 83.33                                            & \textbf{76.68}                                     & \cellcolor[RGB]{239, 243, 248}58.38          \\
ToolRL+DAPO     & \textbf{87.10}                                & \underline{ 76.31}                           & \underline{ 19.75}                                 & 87.50                                            & 67.25                                              & \cellcolor[RGB]{239, 243, 248}\underline{ 61.06}    \\ \midrule
ToolRL+DiPO     & \underline{ 86.21}                                   & \textbf{76.83}                        & \textbf{24.50}                              & 87.50                                            & \underline{ 69.57}                                        & \cellcolor[RGB]{239, 243, 248}\textbf{62.51} \\
\bottomrule
\end{tabular}
}
\label{tab: bfclv3}
\end{table*}

\subsection{Comparison Results of Mathematical Reasoning}

Table \ref{tab: math} shows the comprehensive comparison results of mathematical reasoning in ACC/mean@8 across 6 benchmarks evaluated on three base models, our proposed DiPO demonstrates consistent and superior performance enhancements over other reinforcement learning algorithms. Overall, DiPO achieves the highest average score (AVG) across all three model scales, with 50.55\% for Qwen3-4B-Base, 54.79\% for Qwen3-8B-Base, and 43.56\% for Qwen2.5-7B, surpassing all compared baselines. Specifically, on the more challenging AIME benchmarks, DiPO attains the best results in most cases, such as 29.17\% on AIME24 and 24.58\% on AIME25 with Qwen3-4B-Base, and notably 35.00\% on AIME24 and 27.50\% on AIME25 with Qwen3-8B-Base, indicating its strong capability in handling complex mathematical reasoning tasks. Furthermore, the performance gains are particularly pronounced in larger models like Qwen3-8B-Base, where DiPO outperforms the second-best method by a clear margin in AVG (54.79\% vs. 53.90\%), highlighting its scalability and effectiveness in leveraging model capacity.  Notably, although DAPO with EL achieves highly competitive results, it demonstrates pronounced sensitivity to its configuration coefficients, as further examined in the {Appendix} \ref{sec: app coef}.



\subsection{Comparison Results of Function Calling}

Building upon the mathematical reasoning evaluation, we further assess the performance of DiPO on the function calling task. As presented in the Table \ref{tab: bfclv3}, we use TooRL as the baseline and replicated TooRL+DAPO for comparison. The results demonstrate that DiPO delivers superior overall performance, achieving the highest Overall acc of 55.03\% and 62.51\% on the Qwen2.5-3B-Instruct and Qwen2.5-7B-Instruct models, respectively. Notably, DiPO demonstrates exceptional capability in handling complex, multi-round interactions, as evidenced by its leading performance in Multi-Turn Acc. It achieves scores of 8.62\% and 24.50\% for the 3B and 7B models respectively, surpassing the second-best method (ToolRL+DAPO, with 8.00\% and 19.75\%) by 0.62 and 4.75 percentage points. This comprehensive improvement over strong function calling baselines, validates the effectiveness and general applicability of the DiPO not only in mathematical reasoning.



\subsection{Hyperparameter Analysis}

\begin{table*}[]
\centering
\caption{The impact of hyperparameter $\alpha$ on performance. The best results are marked in \textbf{bold}.}
\scalebox{.87}{
\setlength{\tabcolsep}{4mm}
\begin{tabular}{c|c|c|c|c|c|c|c}
\toprule
\textbf{$\alpha$}     & {\textbf{AIME24}} & {\textbf{AIME25}} & {\textbf{MATH}} & {\textbf{AMC}} & {\textbf{OLY}} & {\textbf{MIN}} & {\textbf{AVG}}    \\ \midrule
\multicolumn{8}{c}{\cellcolor[RGB]{239, 239, 239}Qwen3-4B-Base}     \\ \midrule
$\alpha=0.0$ & 26.25 & 23.75 & 86.43 & 61.90       & 53.88       & 44.34       & \cellcolor[RGB]{239, 243, 248}49.43   \\
$\alpha=0.1$ & \textbf{29.17} & \textbf{24.58} & \textbf{86.93}    & \textbf{63.70}   & \textbf{54.09}   & 44.76       & \cellcolor[RGB]{239, 243, 248}\textbf{50.55} \\
$\alpha=1.0$ & 25.83 & 24.17 & 86.38 & 62.05       & 54.01       & \textbf{45.50}   & \cellcolor[RGB]{239, 243, 248}49.66   \\ \midrule
\multicolumn{8}{c}{\cellcolor[RGB]{239, 239, 239}Qwen3-8B-Base}     \\ \midrule
$\alpha=0.0$ & 30.08 & 25.83 & 89.43 & 69.12       & 56.90       & 48.02       & \cellcolor[RGB]{239, 243, 248}53.23   \\
$\alpha=0.1$ & \textbf{35.00} & \textbf{27.50} & \textbf{89.55}    & \textbf{71.23}   & \textbf{57.73}   & 47.79       & \cellcolor[RGB]{239, 243, 248}\textbf{54.79} \\
$\alpha=1.0$ & 32.08 & 24.58 & 88.68 & 70.03       & 56.71       & \textbf{48.07}   & \cellcolor[RGB]{239, 243, 248}53.36   \\ \bottomrule
\end{tabular}
}
\label{tab: hyperparameter}
\end{table*}
The orthogonal design between the reallocated rewards and validation rewards enables precise, independent adjustment of DiPO's influence via the hyperparameter $\alpha$. Table \ref{tab: hyperparameter} presents the performance variations of Qwen3-4B-Base and Qwen3-8B-Base models under different values of $\alpha$. For both model scales, the performance tends to reach the optimal level when $\alpha$ is set to $0.1$: 4B model achieves the highest average (AVG) performance of 50.55\%, with significant improvements compared to $\alpha=0.0$ and $\alpha=1.0$; similarly, 8B-model attains the best AVG of 54.79\% at $\alpha=0.1$. When $\alpha=0.0$, DiPO's contribution is zero, leading to the lowest AVG performance for both models (49.43\% and 53.23\%).


\subsection{Ablation Experiment and Discussion}

Table \ref{tab: ablation study} presents ablation results of the contributions of PSD and BRR on Qwen3-4B-Base and Qwen3-8B-Base on six mathematical reasoning benchmarks. PPL reward refers to using PPL directly as rewards: positive PPL for hard groups, and negative PPL for easy groups. Overall, the combination of PSD and BRR achieves the best performance for both 4B and 8B models, verifying their effectiveness. Detailed discussions are as follows.

\textbf{Discussion 1. Fine-grained exploration and exploitation make RL more effective.} In Table \ref{tab: ablation study}, without PSD, all hard samples are indiscriminately driven toward high PPL, whereas easy samples are steered toward low PPL. PSD improves this by disentangling the space into ErS (high PPL) and EiS (low PPL), allowing for fine-grained optimization: easy groups within ErS are encouraged toward EiS, while hard groups within EiS are directed toward ErS. The results confirm the critical role of PSD in model enhancement. When using the PPL reward, PSD yields improvements of 2.39 and 1.19 points on the 4B and 8B models, respectively; with the BRR reward, the corresponding gains are even larger at 2.99 and 3.88 points. Significant improvements indicate that invalid exploration and exploitation are needed for optimization, while excessive encouragement of exploration and exploitation is detrimental.

\textbf{Discussion 2. Reward shaping should not cause significant changes to the intrinsic validation rewards.} In Table \ref{tab: ablation study}, we compare our BRR against PPL reward. The experimental results confirm that BRR is superior to the PPL reward method. Concretely, with using PSD, BRR compared to PPL reward brings an improvement of 0.93 and 2.23 respectively on the 4B and 8B models, while using PSD and PPL reward is even worse than the baseline. Science the reward variance of easy and hard groups are both zeros, directly using PPL as a reward causes drastic shifts in the reward distribution, potentially destabilizing the training process. In contrast, BRR introduces the reallocated rewards maintain a distribution very similar to the original validation rewards, which allows BRR to leverage PPL signals for enhancement with the minimum impact on the validation reward landscape.

\begin{table*}[]
\centering
\caption{Ablation study results on Qwen3-4B-Base and Qwen-8B-Base, showing ACC/mean@8 on six mathematical reasoning benchmarks. The best results are marked in \textbf{bold}.}
\scalebox{.87}{
\setlength{\tabcolsep}{1.3mm}
\begin{tabular}{c|c|c|c|c|c|c|c|c|c}
\toprule
\quad\textbf{PSD}\quad & \quad\textbf{BRR}\quad & \textbf{PPL reward} & \quad\textbf{AIME24}\quad & \quad\textbf{AIME25}\quad & \quad\textbf{MATH}\quad  & \quad\textbf{AMC}\quad   & \quad\textbf{OLY}\quad   & \quad\textbf{MIN}\quad   & \quad\textbf{AVG}\quad \\ \midrule
\multicolumn{10}{c}{\cellcolor[RGB]{239, 239, 239}Qwen3-4B-Base}                                                                                                                                         \\ \midrule
\textcolor{red}{\ding{55}} & \textcolor{red}{\ding{55}} & \textcolor{red}{\ding{55}} & 26.25 & 23.75 & 86.43 & 61.90 & 53.88 & 44.34 & \cellcolor[RGB]{239, 243, 248}49.43 \\
\textcolor{red}{\ding{55}} & \textcolor{green}{\ding{51}} & \textcolor{red}{\ding{55}} & 25.83 & 20.83 & 85.10 & 59.19 & 49.37 & 43.06 & \cellcolor[RGB]{239, 243, 248}47.23 \\
\textcolor{red}{\ding{55}} & \textcolor{red}{\ding{55}} & \textcolor{green}{\ding{51}} & 24.58 & 23.75 & 84.60 & 61.45 & 49.09 & 41.82 & \cellcolor[RGB]{239, 243, 248}47.55 \\
\textcolor{green}{\ding{51}} & \textcolor{red}{\ding{55}} & \textcolor{green}{\ding{51}} & 27.50 & 23.33 & 86.35 & 62.35 & 54.00 & 44.21 & \cellcolor[RGB]{239, 243, 248}49.62 \\
\textcolor{green}{\ding{51}} & \textcolor{green}{\ding{51}} & \textcolor{red}{\ding{55}} & \textbf{29.17}  & \textbf{24.58}  & \textbf{86.93} & \textbf{63.70} & \textbf{54.09} & \textbf{44.76} & \cellcolor[RGB]{239, 243, 248}\textbf{50.55} \\ \midrule
\multicolumn{10}{c}{\cellcolor[RGB]{239, 239, 239}Qwen3-8B-Base}                                                                                                                                         \\ \midrule
\textcolor{red}{\ding{55}} & \textcolor{red}{\ding{55}} & \textcolor{red}{\ding{55}} & 30.08 & 25.83 & 89.43 & 69.12 & 56.90 & \textbf{48.02} & \cellcolor[RGB]{239, 243, 248}53.23 \\
\textcolor{red}{\ding{55}} & \textcolor{green}{\ding{51}} & \textcolor{red}{\ding{55}} & 30.08 & 24.58 & 88.35 & 65.36 & 53.62 & 46.23 & \cellcolor[RGB]{239, 243, 248}51.37 \\
\textcolor{red}{\ding{55}} & \textcolor{red}{\ding{55}} & \textcolor{green}{\ding{51}} & 27.92 & 25.42 & 88.00 & 64.76 & 53.45 & 45.63 & \cellcolor[RGB]{239, 243, 248}50.86 \\
\textcolor{green}{\ding{51}} & \textcolor{red}{\ding{55}} & \textcolor{green}{\ding{51}} & 30.83 & 24.58 & 88.15 & 68.83 & 56.65 & 46.30 & \cellcolor[RGB]{239, 243, 248}52.56 \\
\textcolor{green}{\ding{51}} & \textcolor{green}{\ding{51}} & \textcolor{red}{\ding{55}} & \textbf{35.00}  & \textbf{27.50}  & \textbf{89.55} & \textbf{71.23} & \textbf{57.73} & 47.42 & \cellcolor[RGB]{239, 243, 248}\textbf{54.79} \\ \bottomrule
\end{tabular}
}
\label{tab: ablation study}
\end{table*}

\subsection{Quantitative Results}
To further verify the properties of DiPO, we conducted a quantitative analysis of some data during the training process of DAPO and DiPO.

\textbf{Exploration-exploitation trade-off.} Figure \ref{fig: ppl distribution} shows the PPL distribution of the Qwen3-8B-Base after training on DAPO-17K with DAPO and DiPO. The results demonstrate that under the DAPO training, the PPL distribution of correct and incorrect samples exhibits a significant overlap. This phenomenon directly causes a large number of hard samples to lose exploration capability, thereby limiting the plasticity of RL. In contrast, DiPO shows superior distribution characteristics: error samples are more located in the high-PPL region, while correct samples are still concentrated in the low-PPL region, which achieves a balanced trade-off between exploration and exploitation in RL.

\begin{figure}[h]
\centering
\begin{minipage}{0.48\linewidth}
    \centering
    \subfigure{\includegraphics[width=0.48\linewidth]{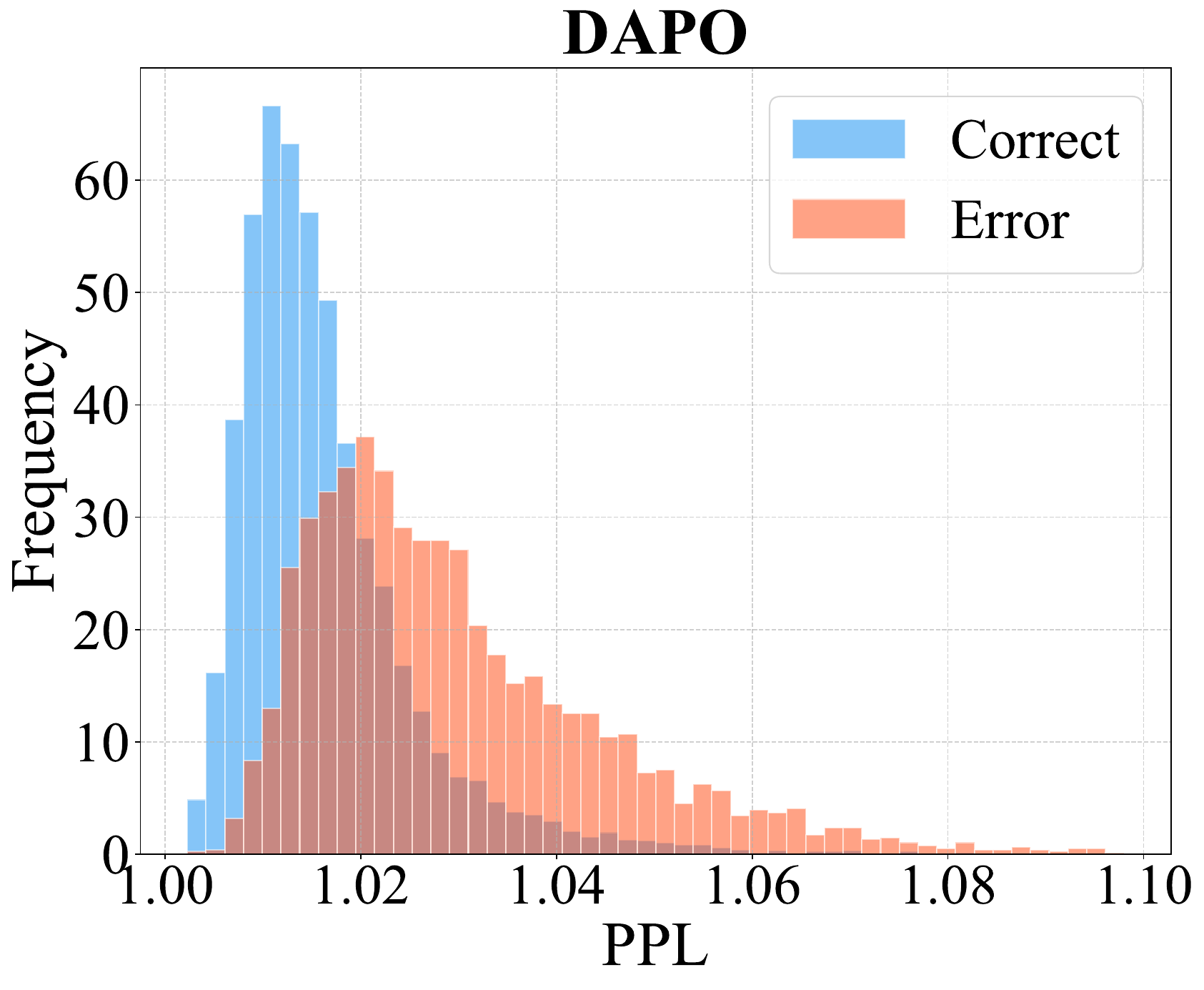}}
    \subfigure{\includegraphics[width=0.48\linewidth]{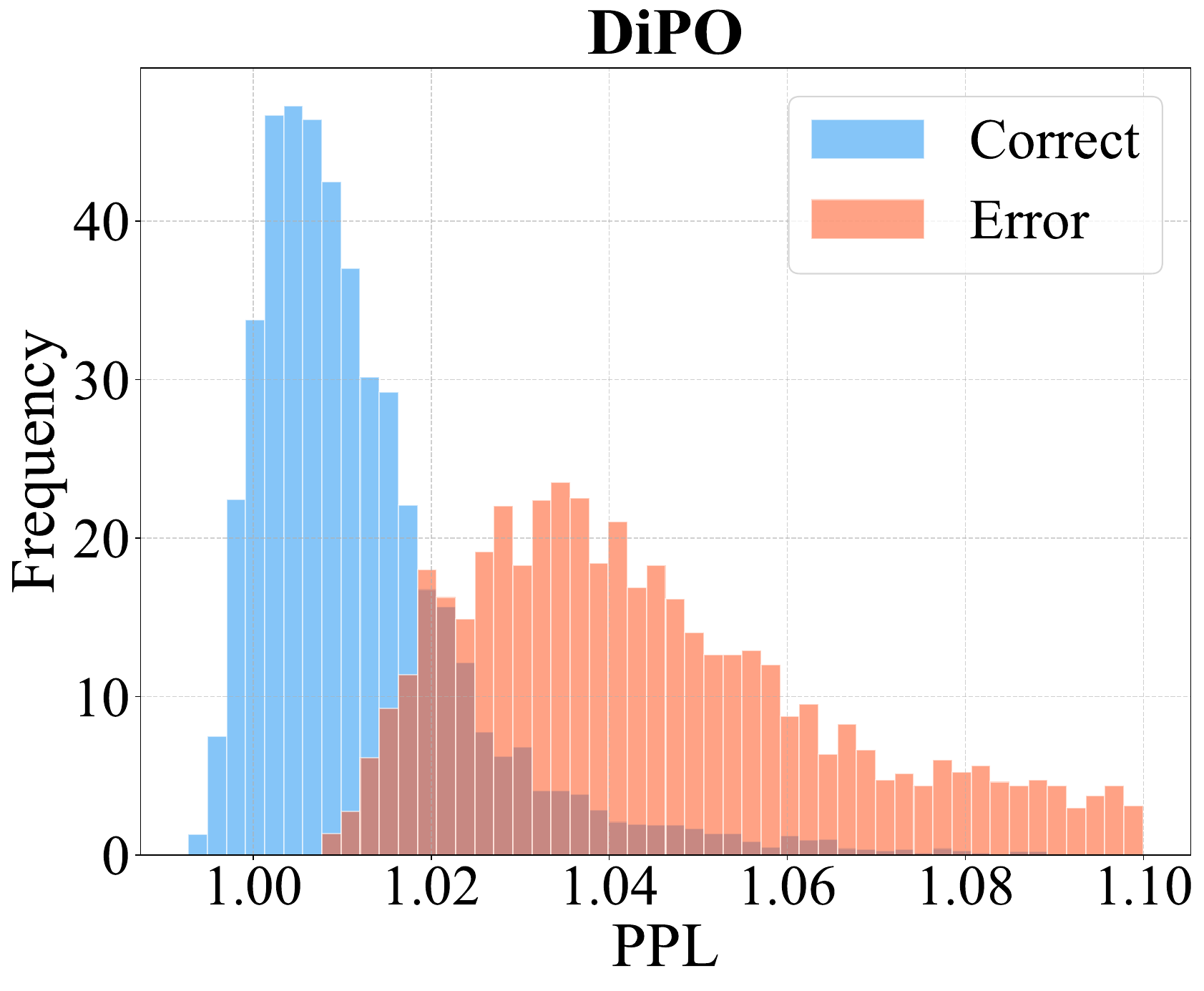}}
    \caption{PPL distribution comparison of correct and error samples for Qwen3-8B-Base trained on DAPO-17K Dataset via DAPO and DiPO.}
    \label{fig: ppl distribution}
\end{minipage}
\hspace{0.01\linewidth}  
\begin{minipage}{0.48\linewidth}
    \centering
    \subfigure{\includegraphics[width=0.48\linewidth]{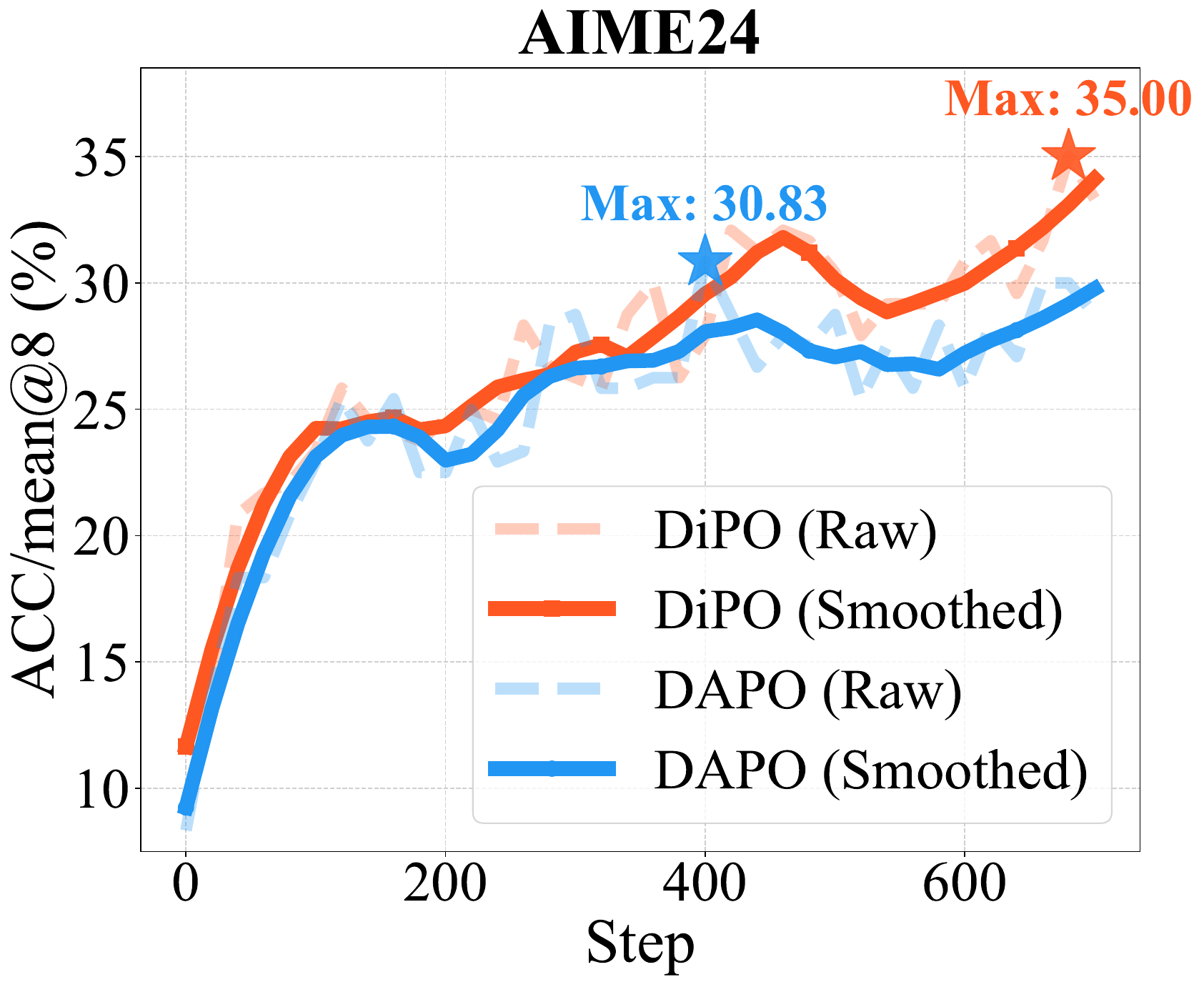}}
    \subfigure{\includegraphics[width=0.48\linewidth]{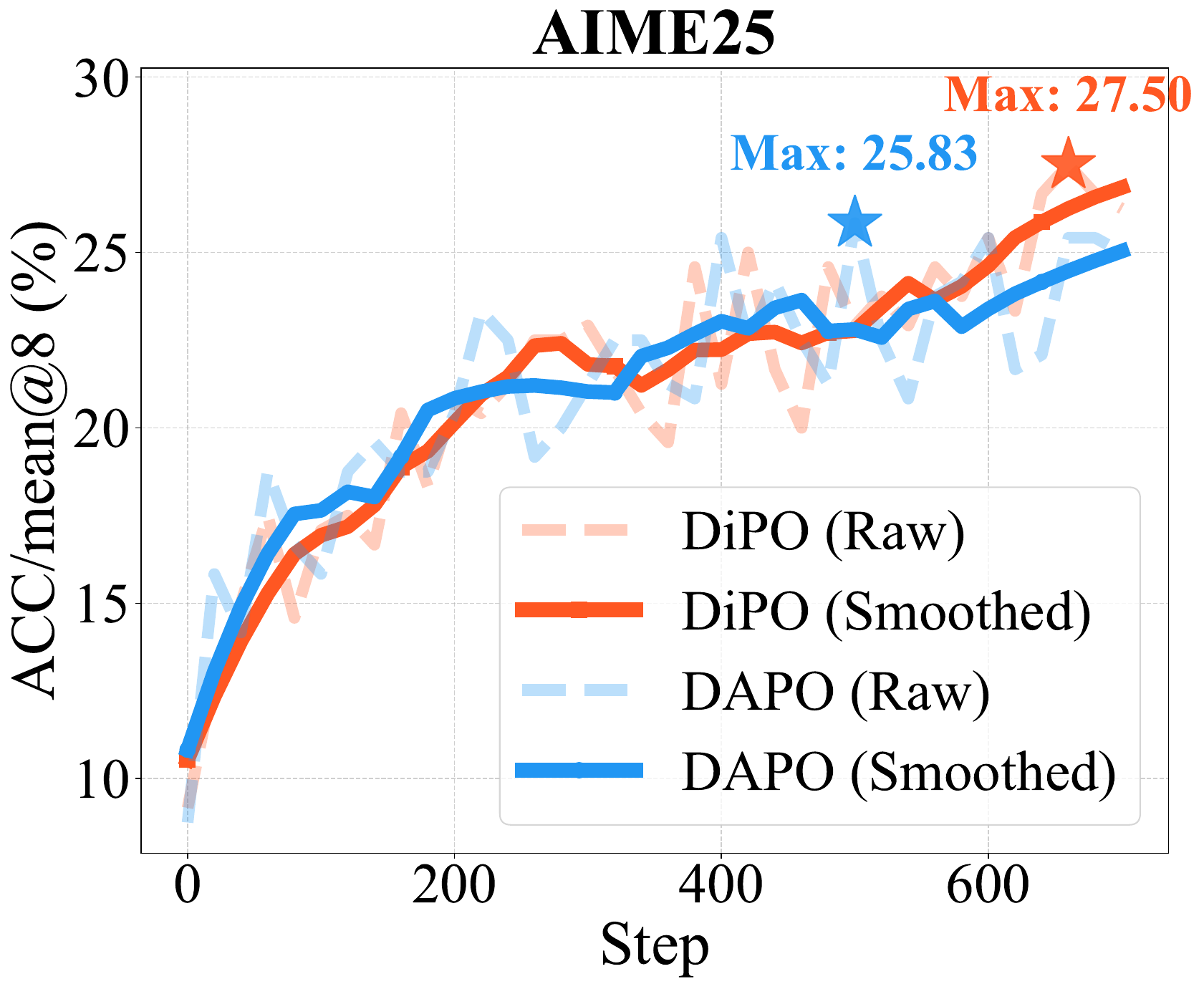}}
    \caption{ACC/mean@8 curves of DiPO and DAPO (raw and smoothed curves) on AIME24 and AIME25 with using Qwen3-8B-Base model.}
    \label{fig: acc curves}
\end{minipage}
\end{figure}


\textbf{Higher upper bound for later training.} Figure \ref{fig: acc curves} presents the test curves of DiPO and DAPO on two AIME benchmarks during the training process. The results show that at the initial stage of training, there was no significant difference in the performance of the two methods; in the later stages of training, however, the performance curve of DAPO exhibited a significant slowdown in growth, whereas DiPO maintained a sustained capacity for exploratory improvement. Combined with the PPL distribution characteristics in Figure \ref{fig: acc curves}, this phenomenon can be further explained: DiPO enables hard samples to conduct more sufficient exploration, while the training paradigm of DAPO leads to a conservative tendency in the exploration strategy of hard samples.


\section{Conclusion}
This paper explores the exploration-exploitation trade-off between in RL training. We analyze the two ETTO dilemmas faced by the extreme hard and easy groups, and proposed the Disentangled Perplexity Policy Optimization. First, we develop a novel Perplexity Space Disentangling method to identify which hard samples require encouragement for exploration and which easy samples need promotion for exploitation. Then, we design a Bidirectional Reward Reallocation mechanism that incorporates PPL-based exploration and exploitation signals while minimizing disruptions to the original verification reward distribution. Finally, extensive experiments on mathematical reasoning and function calling tasks demonstrate the comprehensive superiority of the proposed method.

\bibliographystyle{plainnat}   
\bibliography{ref}              

\newpage

\appendix

\section{Proof of Section~\ref{sec: brr}}
\label{sec: proof}
\subsection{Mathematical proof}
\begin{proof}
Given a language model $\pi_\theta$ with parameters $\theta$. Given a query $q$, sample $G$ responses $\{o^i\}_{i=1}^G$. Each response $o^i$ is a token sequence $(o^i_1, \dots, o^i_{T^i})$. The model's probability for token $o^i_t$ given context $(q, o^i_{<t})=\pi_\theta(y \mid q, o^i_{<t})$ is denoted as  $\pi_t^i(o^i_t)$. The probability of other tokens in position of the $t$-th token is denoted as $\pi_t^i(y)$, and the logistic of $y$ is denoted as $l_i^t(y)$, $y\in \mathcal{V}$ and $\mathcal{V}$ is the vocabulary of $\pi_\theta$.
For convenience, we do not consider the clip operation, the loss function is as follows:
\begin{equation}
    \begin{aligned}
    L(\theta)  = -\frac{1}{G}\sum_{i=1}^G \frac{1}{|o^i|}\sum_{t=1}^{|o^i|} \hat{A}^i_t \frac{\pi^i_t(o^i_t)}{\pi_{old}(o^i_t)} , \   \text{where} \ \ \ \ \pi_t^i(o^i_t) = \frac{exp(l^i_t(o_i^t))}{\sum_{y \in \mathcal{V}}exp(l^i_t(y))}.
    \end{aligned}
    \label{eq: loss}
\end{equation}
Optimization gradient descent is $\theta' = \theta - \eta \nabla_\theta L$, with small learning rate $\eta$.
Define the token-level entropy for response $i$ at token $t$:
\begin{equation}
    H^i_t = -\sum_{y \in \mathcal{V}} \pi_\theta(y \mid q, o^i_{<t}) \log \pi_\theta(y \mid q, o^i_{<t}) = -\sum_{y \in \mathcal{V}} \pi_t^i(y) \log \pi_t^i(y).
    \label{eq: entropy}
\end{equation}
The average token entropy over all responses and time steps is:
\begin{equation}
    H_{\text{avg}} = \frac{1}{G}  \sum_{i=1}^G \frac{1}{|o^i|} \sum_{t=1}^{|o^i|} H^i_t.
    \label{eq: entropy_avg}
\end{equation}
\emph{The following will prove that high PPL rewards and high PPL penalties can respectively increase and decrease $H_{\text{avg}}$.}

\noindent \textbf{Logit Change.} Consider a single token of response $o^i_t$ for training, the loss contribution is $-\hat{A}^i_t \frac{\pi^i_t(o^i_t)}{\pi_{old}(o^i_t)}$. The gradient of the loss with respect to logit $l^i_t(y)$ is:
\begin{equation}
    \frac{\partial (-\hat{A}^i_t \frac{\pi^i_t(y)}{\pi_{old}(y)})}{\partial l_t^i(y)} 
    = \hat{A}^i_t \frac{\pi^i_t(y)}{\pi_{old}(y)} (\pi^i(y) - \mathbf{1}_{y=o^i_t}) \approx \hat{A}^i_t (\pi^i(y) - \mathbf{1}_{y=o^i_t}).
    \label{eq: partial_loss}
\end{equation}
Using gradient descent with learning rate $\eta$, the logit update is:
\begin{equation}
    \Delta l_t^i(y) = \eta \hat{A}^i_t (\mathbf{1}_{y=o^i_t} - \pi_t^i(y)).
    \label{eq: delta_logi}
\end{equation}
\textbf{Probability Change.} According the multivariate Taylor Expansion, for softmax, the first-order change in probability is:
\begin{equation}
    \label{eq: delta_p}
\resizebox{1\hsize}{!}{$
    \begin{aligned}
        \Delta \pi_t^i(y) 
        \approx \sum_{z \in \mathcal{V}} \frac{\partial \pi_t^i(y)}{\partial l^i_t(z)} \Delta l_t^i(z) 
        = \sum_{z \in \mathcal{V}} \pi_t^i(y) (\mathbf{1}_{z=y} - \pi_t^i(z) )  \Delta l_t^i(z) 
        = \pi_t^i(y) \left( \Delta l_t^i(y) - \sum_{z \in \mathcal{V}} \pi_t^i(z) \Delta l_t^i(z) \right)
    \end{aligned}
$}
\end{equation}
where $\pi_t^i(y) = \frac{exp(l^i_t(y))}{\sum_{z \in \mathcal{V}}exp(l^i_t(z))}$. Substituting $\Delta \pi_t^i(y)$.
\begin{equation}
    \begin{aligned}
    \Delta \pi_t^i(y) 
    &= \pi_t^i(y) \left( \eta \hat{A}^i_t (\mathbf{1}_{y=o^i} - \pi_t^i(y)) - \sum_{z \in \mathcal{V}} \pi_t^i(z) \eta \hat{A}^i_t (\mathbf{1}_{z=o^i_t} - \pi_t^i(z)) \right) \\
    &= \eta \hat{A}^i_t \pi^i_t(y) \left( \mathbf{1}_{y=o^i_t} - \pi^i_t(y) - \pi^i_t(o^i_t) + \sum_{z\in \mathcal{V}} \pi^i_t(z)^2 \right).
    \end{aligned}
    \label{eq: delta_p2}
\end{equation}

\noindent \textbf{Entropy Change.} According the Taylor Expansion, the first-order change in entropy $H^i_t$ is:
\begin{equation}
\resizebox{1\hsize}{!}{$
    \begin{aligned}
        \Delta H^i_t 
        & \approx \sum_{y\in \mathcal{V}} \frac{\partial H_i^t}{\partial \pi_t^i(y)} \Delta \pi_t^i(y)
        = -\sum_y \Delta \pi^i_t(y) (1 + \log \pi^i_t(y)), 
        & \text{where} \ \ \ H_i^t= \left(-\sum_{y \in \mathcal{V}} \pi_t^i(y) \log \pi_t^i(y) \right).
    \end{aligned}
$}
\end{equation}
Substituting $\Delta \pi^i_t(y)$:
\begin{equation}
\resizebox{1\hsize}{!}{$
    \begin{aligned}
        \Delta H_t^i 
        & = -\eta \hat{A}^i_t \sum_{y\in \mathcal{V}} \pi^i_t(y) \Bigg[ \mathbf{1}_{y=o^i_t} - \pi^i_t(y) - \pi^i_t(o^i_t) + \sum_{z\in \mathcal{V}} \pi^i_t(z)^2 \Bigg] (1 + \log \pi^i_t(y)) \\
        = & -\eta \hat{A}^i_t \Bigg[ \pi^i_t(o^i_t)\log \pi^i_t(o^i_t) - \sum_{y\in \mathcal{V}} \pi^i_t(y)^2\log \pi^i_t(y) - \pi^i_t(o^i_t)\sum_{y\in \mathcal{V}} \pi^i_t(y)\log \pi^i_t(y) + \sum_{z\in \mathcal{V}}  \pi^i_t(z)^2 \sum_{y\in \mathcal{V}} \pi^i_t(y) \log \pi^i_t(y) \Bigg] \\
        = & -\eta \hat{A}^i_t \Bigg[ \pi^i_t(o^i_t)\log \pi^i_t(o^i_t) - \sum_{y\in \mathcal{V}}  \pi^i_t(y)^2 H^i_t - \sum_{y\in \mathcal{V}} \pi^i_t(y)^2\log \pi^i_t(y) + \pi^i_t(o^i_t)H^i_t \Bigg] \\  
    \end{aligned}
$}
\end{equation}

\noindent \textbf{Average Entropy Change.} \emph{Ignore the impact of cross-context updates}, the first-order change in token entropy over all queries is:
\begin{equation}
\resizebox{1\hsize}{!}{$
    \begin{aligned}
        \Delta H_{avg} = \mathbb{E}_{q\sim \mathcal{D}}\Bigg[\frac{1}{G} \sum_{i=1}^G \frac{1}{|o^i|}\sum_{t=1}^{|o^i|} -\eta \hat{A}^i_t \Bigg( \pi^i_t(o^i_t)\log \pi^i_t(o^i_t)
        - \sum_{y\in \mathcal{V}} \pi^i_t(y)^2\log \pi^i_t(y) + \pi^i_t(o^i_t)H^i_t - \sum_{y\in \mathcal{V}}  \pi^i_t(y)^2 H^i_t \Bigg) \Bigg] \\  
    \end{aligned}
$}
\end{equation}
Let $B_t^i = \pi^i_t(o^i_t)\log \pi^i_t(o^i_t) + \pi^i_t(o^i_t)H^i_t$ and $F_t^i= - \sum_{y\in \mathcal{V}} \pi^i_t(y)^2\log \pi^i_t(y) - \sum_{y\in \mathcal{V}}  \pi^i_t(y)^2 H^i_t$:
\begin{equation}
    \begin{aligned}
        \Delta H_{avg} = \mathbb{E}_{q\sim \mathcal{D}}\left[\frac{-\eta}{G} \sum_{i=1}^G \mathbb{E}_{t=1:|o^i|} \left[\hat{A}^i_t \left( B_t^i + F_t^i \right) \right] \right] \\  
    \end{aligned}
\end{equation}

\noindent The response with maximum PPL of $\{o^i\}_{i=1}^G$ is denoted as $o^m$, 

\begin{itemize}
    \item Rewarding maximum PPL $\hat{A}^m_t = \sqrt{G - 1}, \hat{A}^i_t = -\frac{1}{\sqrt{G - 1}}$.
    \begin{equation}
        \resizebox{0.8\hsize}{!}{$
        \begin{aligned}
            \Delta H_{avg} 
            & = \mathbb{E}_{q\sim \mathcal{D}}\left[\frac{-\eta}{G} \left(-\frac{1}{\sqrt{G - 1}} \sum_{\substack{i=1 \\ i\not=m}}^G \mathbb{E}_{t=1:|o^i|} \left[ \left( B_t^i + F_t^i \right) \right] + \sqrt{G - 1} \mathbb{E}_{t=1:|o^m|} \left[ \left( B_t^m + F_t^m \right) \right] \right) \right] \\
            & = \mathbb{E}_{q\sim \mathcal{D}} \left[ \frac{-\eta\sqrt{G-1}}{G} \left( \mathbb{E}_{t=1:|o^m|}\left[B_t^m + F_t^m \right] - \mathbb{E}_{i\not=m}\left[ \mathbb{E}_{t=1:|o^i|}\left[ B_t^i+F_t^i\right]\right] \right) \right] \\
            & = \frac{-\eta\sqrt{G-1}}{G} \left( \mathbb{E}\left[B_t^m\right] + \mathbb{E}\left[F_t^m \right] - \mathbb{E}_{i\not=m}\left[ B_t^i\right] - \mathbb{E}_{i\not=m}\left[F_t^i\right] \right) \\
        \end{aligned}
        $}
    \end{equation}
    \emph{We assume the difference in $F_t$ is negligible as it reflects global distribution statistics that remain statistically invariant across samples in a large dataset, thus} $\mathbb{E}\left[F_t^m \right] - \mathbb{E}_{i\not=m}\left[F_t^i\right] \approx 0$.
    \begin{equation}
        \begin{aligned}
            \Delta H_{avg} \approx \frac{\eta\sqrt{G-1}}{G} \left(\mathbb{E}_{i\not=m}\left[ B_t^i\right] -\mathbb{E}\left[B_t^m\right] \right) \\
        \end{aligned}
    \end{equation}
    \item Penalizing maximum PPL $\hat{A}^m_t = -\sqrt{G - 1}, \hat{A}^i_t = \frac{1}{\sqrt{G - 1}}$.
        \begin{equation}
        \begin{aligned}
            \Delta H_{avg} \approx \frac{\eta\sqrt{G-1}}{G} \left( \mathbb{E}\left[B_t^m\right] - \mathbb{E}_{i\not=m}\left[ B_t^i\right] \right) \\
        \end{aligned}
    \end{equation}
\end{itemize}
\noindent When $e^{-1-H_t^i}<o_t^i<1$, $B^i_t$ increases with $\pi_t^i(o_t^i)$. Moreover, since $\{o^i\}$ is sampled by the $\pi$ with probability, and the smaller $H_t^i$ is, the higher the sampling probability, we can approximately assume that $e^{-1-H_t^i}<o_t^i<1$ is basically satisfied. \textbf{The conclusion is that rewarding maximum PPL then $\Delta H_{avg}>0$ (entropy increases) and penalize maximum PPL then $\Delta H_{avg}<0$ (entropy decreases)}.
\end{proof}
\begin{mdframed}[
    linecolor=green!70!black, 
    backgroundcolor=green!5!white, 
    roundcorner=20pt, 
    linewidth=0pt, 
    innermargin=10pt, outermargin=10pt, 
    skipabove=5, skipbelow=5 
]
\textbf{Import Statement.} Given that the parameter space of LLMs is vast and complex updating, strict mathematical proof is unrealizable. The aforementioned proof is based on multiple idealized assumptions and only provide an approximate estimate of the trend in entropy change. In actual training, the theoretical results need to be further verified through experiments.
\end{mdframed}
\subsection{Experimental verification}
To further verify the feasibility of the theory in section \ref{sec: brr}, we designed a verification experiment. Specifically, we used the Qwen3-0.6B \cite{qwen3} model and employed DAPO-17K \cite{DAPO} as the training set, discarding the verification reward. Instead, we utilize max-PPL reward and max-PPL penalty as training reward, respectively, and recorded the changes in model entropy. As illustrated in Figure \ref{fig: ppl val}, the trend of entropy update is consistent with the proof.

\begin{figure*}[h]
\centering
  \subfigure{
    \includegraphics[width=0.47\linewidth]{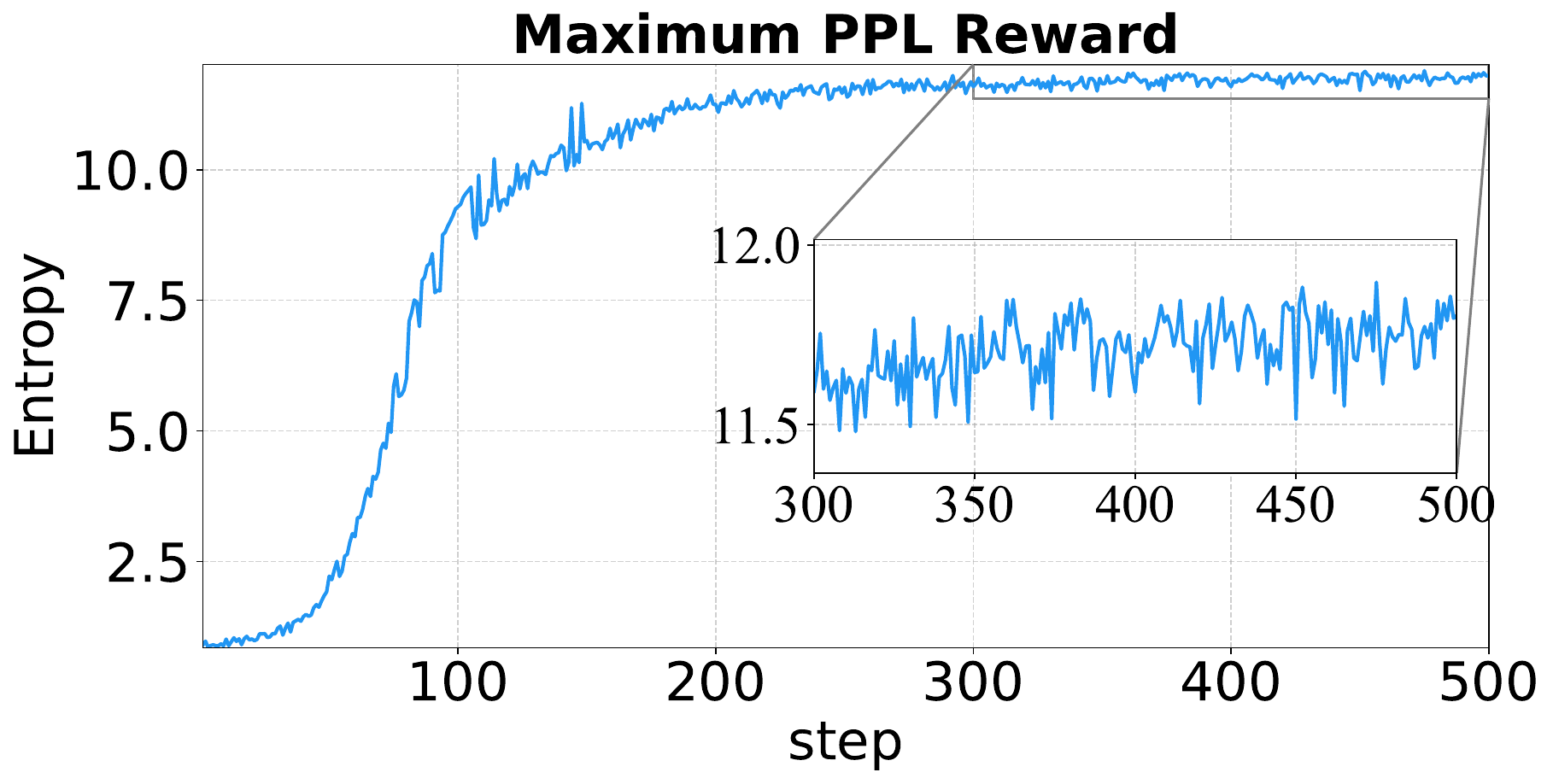} 
  }
  \subfigure{
    \includegraphics[width=0.47\linewidth]{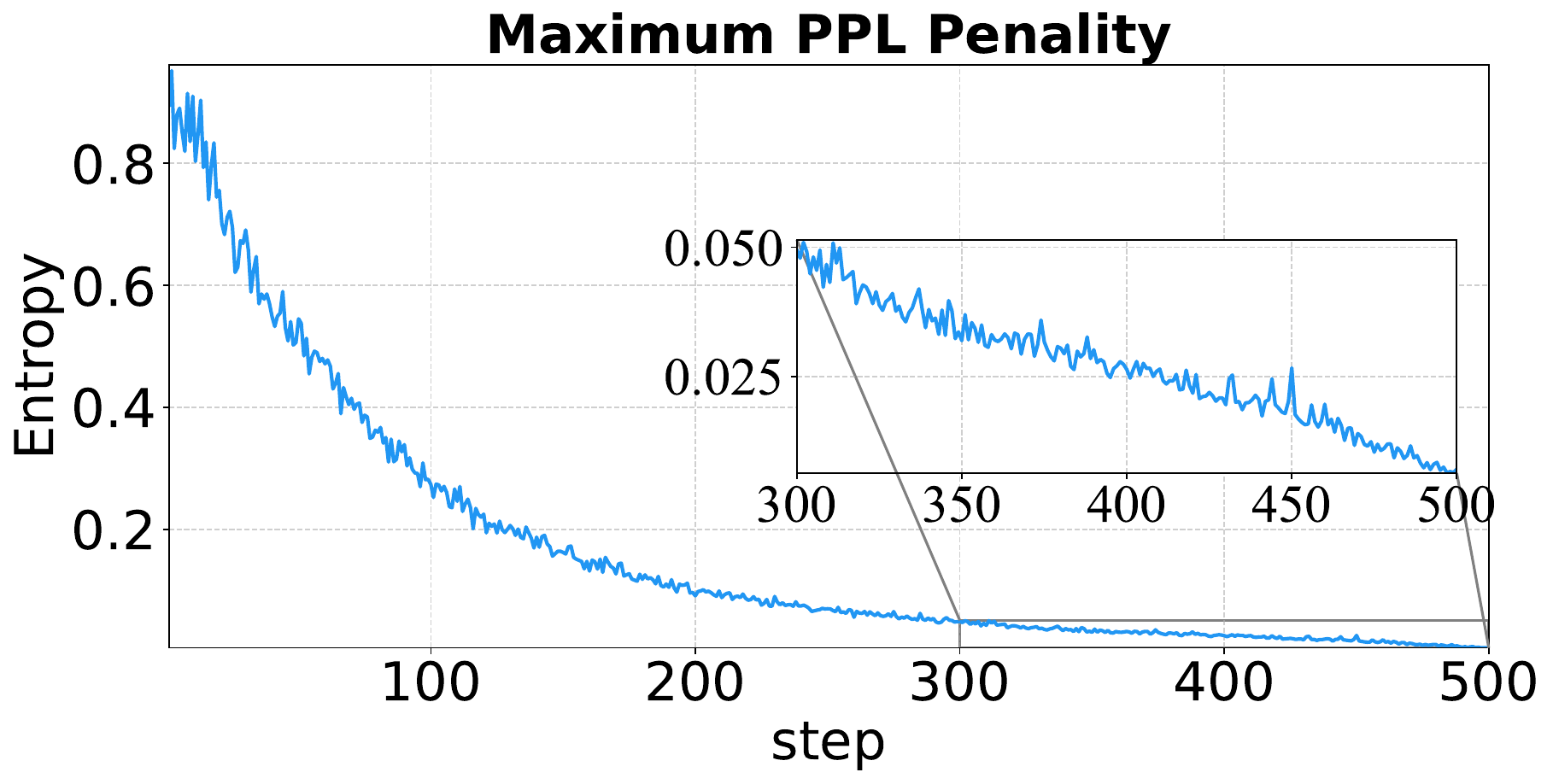}
  }
\caption{Entropy curves of maximum-PPL reward and maximum-PPL penalty trained on Qwen3-0.6B model  with using DAPO-17K.}
\label{fig: ppl val}
\end{figure*}

\section{Related Works}
\subsection{Reinforcement Learning for LLMs}
Policy optimization algorithms have evolved significantly to address the challenges of RL for LLMs (LLM-RL). Trust Region Policy Optimization (TRPO) \cite{TRPO} introduced KL divergence constraints for stable policy updates, laying foundational ideas. Proximal Policy Optimization (PPO) \cite{PPO} simplified TRPO with clip-based objectives, becoming the standard for LLM-RL. GRPO \cite{GRPO} optimized PPO for mathematical reasoning by a novel advantage calculation strategy, boosting efficiency and removing value networks. DAPO \cite{DACE} further scaled LLM-RL with decoupled clipping and dynamic sampling, while VAPO \cite{VAPO} enhanced reasoning reliability via hierarchical advantage estimation. GSPO \cite{GSPO} extended GRPO's grouping strategy to sequence-level optimization for mixture-of-experts models and long-form reasoning tasks.

\subsection{Reward Shaping}
Recent works on reward shaping for LLM-RL have advanced across key directions. CrossDomain-RLVR \cite{CrossDomain-RLVR} expanded verifiable reward RL to diverse unstructured domains via generative scoring. PKPO \cite{PKPO} and Pass@k Training \cite{Pass@kTraining} optimized pass@k performance to enhance sample diversity and collective utility, addressing exploration limitations. rl-without-gt \cite{rl-without-gt} introduced format-length surrogate signals to bypass ground truth dependence. RuscaRL \cite{RuscaRL} leveraged rubric scaffolding to break exploration bottlenecks, while DACE \cite{DACE}, CDE \cite{CDE} proposed difficulty-aware certainty and curiosity-driven signals for adaptive exploration. DRER \cite{DRER} focused on reasoning quality with fine-grained CoT rewards, and OBE \cite{OBE} mitigated diversity collapse via outcome-based exploration bonuses. 

\subsection{Entropy and Perplexity-Driven Exploration}
Entropy and perplexity are key signals for RL of adaptive balance between exploration and exploitation. Some advantage-enhanced methods \cite{EntropyAdvantage, trial} revisited entropy/perplexity as a signal, augmenting the advantage function to promote deep reasoning chains and boost Pass@K. \cite{ETTRL} extended entropy mechanisms to test-time RL via ETMR and EAR, enhancing efficiency and diversity. \cite{CDE} integrated actor perplexity and critic value variance as curiosity bonuses to mitigate premature convergence. \cite{DACE} leveraged difficulty-aware certainty to dynamically modulate exploration, rewarding or penalizing confidence based on task complexity. \cite{EEPO} introduced adaptive unlearning in two-stage rollouts to break entropy collapse loops. 

\section{Algorithm Details}
\begin{algorithm}[]
\caption{Perplexity Space Disentangling}
\label{alg: psd}
\begin{algorithmic}[1]
\REQUIRE Batch of queries $\{\boldsymbol{q}\}$ from $\mathcal{D}$, Policy $\pi_{\boldsymbol{\theta}}$, PPL Queue $\mathcal{Q}$
\FOR{each query $\boldsymbol{q}$}
    \STATE \textbf{// G-Sampling}
    \STATE Generate outputs $\{\boldsymbol{o}^i\}_{i=1}^G \sim \pi_{\boldsymbol{\theta}}(\cdot|\boldsymbol{q})$
    \STATE Compute verifiable rewards $\{\boldsymbol{r}^i\} \leftarrow \mathcal{R}(\boldsymbol{o}^i, \boldsymbol{a})$
    \STATE Compute perplexity $\{\boldsymbol{p}^i\}$ using Eq. (\ref{eq: ppl})
    \STATE Update $\mathcal{Q}$ with pairs $\{(\boldsymbol{p}^i, \boldsymbol{r}^i)\}$
\ENDFOR
\STATE Estimate probabilities $\widehat{\Pr}(R|P > \tau)$ and $\widehat{\Pr}(R|P < \tau)$ using Eq. (\ref{eq:conditional_probs})
\STATE Calculate candidate set $\mathcal{C}=\{\tau|\Delta_{\mathrm{EiS}}(\tau)>0 \land\Delta_{\mathrm{ErS}}(\tau)>0\}$ using (Eq. \ref{eq:correlation_gap} Eq. \ref{eq:ci_gap})
\IF {$\mathcal{C} \not= \emptyset$}
    \STATE $\tau^* \leftarrow \arg\min_{\tau\in\mathcal{C}} \frac{1}{|\mathcal{Q}|} \sum_{\scriptsize{(\boldsymbol{r}_i, \boldsymbol{p}_i) \in \mathcal{Q}}} | \boldsymbol{r}_j - \mathbb{I}(\boldsymbol{p}_j < \tau) |$
\ELSE
    \STATE $\tau^* \leftarrow \text{None}$
\ENDIF
\end{algorithmic}
\end{algorithm}

\begin{algorithm}[]
\caption{Bidirectional Reward Reallocation}
\label{alg: brr}
\begin{algorithmic}[1]
\REQUIRE Optimal threshold $\tau*$, A group of verification reward $\{\boldsymbol{r}^i\}$ and corresponding PPL $\{\boldsymbol{p}^i\}$
\STATE Initialize reallocated rewards $\{\boldsymbol{r}_r^i\} \leftarrow \{\boldsymbol{r}^i\}$
\IF{$\tau^*$ is not None}
    \STATE ${p} = \text{mean}(\{\boldsymbol{p}^i\})$
    \STATE $m = \arg\max_i \boldsymbol{p}^i$
    \IF{$\forall i, \boldsymbol{r}^i = 0$ \textbf{and} ${p} < \tau^*$} 
        \STATE \textbf{// Hard Group in EiS}
        \STATE $\boldsymbol{r}_r^m \leftarrow 1$ \COMMENT{Encourage Exploration}
    \ELSIF{$\forall i, \boldsymbol{r}^i = 1$ \textbf{and} ${p} > \tau^*$}
        \STATE \textbf{// Easy Group in ErS}
        \STATE $\boldsymbol{r}_r^m \leftarrow 0$ \COMMENT{Encourage Exploitation}
    \ENDIF
\ENDIF
\end{algorithmic}
\end{algorithm}

\section{Detailed Experiment Setup}
\subsection{Mathematical Reasoning}
\label{sec: app math}

The mathematical reasoning task uses the DAPO-17K \cite{DAPO} dataset for 700 steps training on 4$\times$8 A100 GPUs. For all comparison methods, all parameters are set from Table \ref{tab: math config}, and the remaining parameters use the default parameters of VERL \cite{verl}. For DAPO/w EL, use 0.001 as the coefficient for entropy loss, and the coefficient for entropy loss for the other methods is all 0. For the reproduced CDE \cite{CDE}, their exclusive hyper-parameters all adopt the default settings from the original paper.

The prompt template for the training set and test set is as: "[Question] Let's think step by step and output the final answer within $\backslash$boxed\{\}", [Question] represents a specific mathematical problem.

\begin{table*}[h!]
\centering
\caption{Training configurations of mathematical reasoning.}
\scalebox{.9}{
\setlength{\tabcolsep}{1mm}
\begin{tabular}{clcclcclc}
\toprule
\multirow{4}{*}{data} & train\_batch\_size    & \multicolumn{1}{c|}{128} & \multirow{4}{*}{actor} & ppo\_mini\_batch  & \multicolumn{1}{c|}{128}  & \multirow{4}{*}{rollout} & temperature             & 1.2 \\
                      & max\_prompt\_length   & \multicolumn{1}{c|}{1K}  &                        & clip\_ratio\_low  & \multicolumn{1}{c|}{0.2}  &                          & n                       & 8   \\
                      & max\_response\_length & \multicolumn{1}{c|}{4K}  &                        & clip\_ratio\_high & \multicolumn{1}{c|}{0.28} &                          & val\_kwargs.n           & 8   \\
                      & gen\_batch\_size      & \multicolumn{1}{c|}{256} &                        & kl\_loss\_coef    & \multicolumn{1}{c|}{0}    &                          & val\_kwargs.temperature & 0.6 \\ \bottomrule
\end{tabular}
}
\label{tab: math config}
\end{table*}

\subsection{Function Calling.}
\label{sec: app tool}
The baseline for Function Calling is TooRL \cite{toolrl}, which is the earliest open-source method to introduce GRPO into Function Calling. Its main contribution lies in a series of reward designs related to function calling, where the reward is no longer a binary 0 or 1 but a range of $[-3, 4]$. The validation rewards of TooRL includes formatting scores and match scores, match scores further includes tool name matching, parameter name matching, and parameter content matching. In order to adapt to DiPO, we denoted the sample with the maximum reward as the correct sample, and other samples as error samples. During BRR, we use 4 and -3 for the correct and error reward reallocation respectively. We implement DAPO and DiPO on 8 A100 GPUs, the hyperparameter settings for DAPO and DiPO are shown in Table \ref{tab: tool config}, which are consistent with TooRL except DAPO's private parameters. The training dataset and system prompt are both consistent with TooRL.

\begin{table*}[h!]
\centering
\caption{Training configurations of function calling.}
\scalebox{.9}{
\setlength{\tabcolsep}{1mm}
\begin{tabular}{clcclcclc}
\toprule
\multirow{4}{*}{data} & train\_batch\_size    & \multicolumn{1}{c|}{512} & \multirow{4}{*}{actor} & ppo\_mini\_batch  & \multicolumn{1}{c|}{128}  & \multirow{4}{*}{rollout} & temperature             & 1.2 \\
                      & max\_prompt\_length   & \multicolumn{1}{c|}{2K}  &                        & clip\_ratio\_low  & \multicolumn{1}{c|}{0.2}  &                          & n                       & 4   \\
                      & max\_response\_length & \multicolumn{1}{c|}{1K}  &                        & clip\_ratio\_high & \multicolumn{1}{c|}{0.28} &                          & val\_kwargs.n           & 4   \\
                      & gen\_batch\_size      & \multicolumn{1}{c|}{1024} &                        & kl\_loss\_coef    & \multicolumn{1}{c|}{0}    &                          & val\_kwargs.temperature & 0.6 \\ \bottomrule
\end{tabular}
}
\label{tab: tool config}
\end{table*}

\section{More Experiment Results}

\subsection{Results on Llama3.1-8B-Instruct}

To further verify the generality of DiPO, we conduct experiments using Llama3.1-8B-Instruct \cite{llama3} on GSM8K \cite{gsm8k} and MATH, as shown in Table \ref{tab: llama}. It can be seen that for Llama3.1-8B-Instruct, entropy loss and CDE are not effective methods, their results are lower than the baseline DAPO, especially for MATH; by contrast, our method shows moderate improvement. Concretely, DiPO achieves the best performance on MATH (56.75\%) and the highest overall average (73.39\%), demonstrating its consistent effectiveness across model families.

\begin{table}[b!]
\centering
\caption{Comparison of mathematical reasoning in acc/mean@8 using Llama3.1-8B-Instruct as base model. The best results is marked in \textbf{bold}.}
\scalebox{.9}{
\setlength{\tabcolsep}{3.5mm}
\begin{tabular}{l|c|c|c}
\toprule
\textbf{Method}    & {\textbf{GSM8K}} & {\textbf{MATH}} & {\textbf{AVG}}       \\ \midrule
Llama3.1-8B-Instruct & 79.80   & 47.83  & \cellcolor[RGB]{239, 243, 248}63.82 \\
DAPO     & {89.89}   & 55.75  & \cellcolor[RGB]{239, 243, 248}72.82 \\
DAPO w/ EL    & {89.66}   & 52.68  & \cellcolor[RGB]{239, 243, 248}71.17 \\
CDE     & {89.01}   & 53.45  & \cellcolor[RGB]{239, 243, 248}71.23 \\
\midrule
\textbf{DiPO (ours)}     & \textbf{90.03}   & \textbf{56.75}  & \cellcolor[RGB]{239, 243, 248}\textbf{73.39} \\ \bottomrule
\end{tabular}
}
\label{tab: llama}

\end{table}

\subsection{Results of Majority Vote}
The ACC/maj@8 metric, which determines the final answer by majority voting across 8 reasoning attempts, primarily reflects a model's consistency and reliability in mathematical problem-solving. A higher ACC/maj@8 score indicates that the model can reliably generate correct reasoning paths, not just occasionally produce the right answer.

As presented in Table \ref{tab: math maj}, the proposed DiPO method yields the highest average performance across the six mathematical benchmarks for all three base models investigated. For the Qwen3-4B-Base model, DiPO achieves an average score of 56.41\%, outperforming CDE—the second-performing method with a score of 55.51\%—by 0.90 percentage points. It attains state-of-the-art performance on AIME24, AMC and MIN, while maintaining strong competitiveness on the remaining benchmarks. For the Qwen3-8B-Base model, DiPO reaches an average score of 60.65\%, exceeding CDE (the second-best method with 59.70\%) by 0.95 percentage points and securing the top performance on four benchmarks, namely AIME24, AIME25, AMC and OLY. For the Qwen2.5-7B model, DiPO registers an average score of 49.79\%, outperforming DAPO—the second-ranked method with 48.97\%—by 0.82 percentage points and ranking first on three benchmarks: AIME24, AMC and OLY. Collectively, DiPO attains the top performance in 10 out of the 18 benchmark-model pairs, which validates its consistent and superior capability in generating reliable mathematical reasoning processes.

\begin{table*}[]
\centering
\caption{Comparison of mathematical reasoning in ACC/maj@8 on 6 mathematics benchmarks. The best and second-best results are respectively marked in \textbf{bold} and underlined. }
\scalebox{0.9}{
\setlength{\tabcolsep}{4mm}
\begin{tabular}{l|c|c|c|c|c|c|c}
\toprule
Model         & \multicolumn{1}{l|}{AIME24} & \multicolumn{1}{l|}{AIME25} & \multicolumn{1}{l|}{MATH} & \multicolumn{1}{l|}{AMC} & \multicolumn{1}{l|}{OLY} & \multicolumn{1}{l|}{MIN} & \multicolumn{1}{l}{AVG}                \\ \midrule
\multicolumn{8}{c}{\cellcolor[RGB]{239, 239, 239}Qwen3-4B-Base} \\ \midrule
Base model & 13.30 & 6.67 & 64.00 & 46.99 & 37.59 & 30.15 & \cellcolor[RGB]{239, 243, 248}33.12 \\
GRPO & \textbf{36.67} & \underline{ 26.67} & 89.60 & 68.67 & 58.10 & 50.37 & \cellcolor[RGB]{239, 243, 248}55.01 \\
DAPO & \underline{ 33.33} & \textbf{30.00} & 89.60 & \underline{ 69.88} & \underline{ 59.29} & 49.26 & \cellcolor[RGB]{239, 243, 248}55.23 \\
DAPO w/ EL & \underline{ 33.33} & \textbf{30.00} & \textbf{90.40} & 68.67 & \textbf{60.03} & \underline{ 50.40} & \cellcolor[RGB]{239, 243, 248}55.47 \\
CDE & \textbf{36.67} & \textbf{30.00} & 89.80 & \underline{ 69.88} & 57.06 & 49.63 & \cellcolor[RGB]{239, 243, 248}\underline{ 55.51} \\ \midrule
\textbf{DiPO (ours)} & \textbf{36.67} & \textbf{30.00} & \underline{ 90.20} & \textbf{72.29} & 58.59 & \textbf{50.70} & \cellcolor[RGB]{239, 243, 248}\textbf{56.41} \\ \midrule
\multicolumn{8}{c}{\cellcolor[RGB]{239, 239, 239}Qwen3-8B-Base} \\ \midrule
Base model & 16.67 & \underline{ 13.33} & 80.20 & 59.04 & 44.57 & 36.76 & \cellcolor[RGB]{239, 243, 248}41.76 \\
GRPO & 36.67 & \textbf{33.33} & 92.40 & 77.11 & \underline{ 62.11} & \textbf{54.41} & \cellcolor[RGB]{239, 243, 248}59.34 \\
DAPO & 36.67 & \textbf{33.33} & 92.40 & 77.11 & 61.07 & \underline{ 53.68} & \cellcolor[RGB]{239, 243, 248}59.04          \\
DAPO w/ EL & \underline{ 40.00} & \textbf{33.33} & 92.00 & \underline{ 78.31} & 61.37 & 52.21 & \cellcolor[RGB]{239, 243, 248}59.54 \\
CDE & \underline{ 40.00} & \textbf{33.33} & \textbf{93.20} & 75.90 & \underline{ 62.11} & \underline{ 53.68} & \cellcolor[RGB]{239, 243, 248}\underline{ 59.70} \\ \midrule
\textbf{DiPO (ours)} & \textbf{43.30} & \textbf{33.33} & \underline{ 92.60} & \textbf{79.52} & \textbf{62.90} & 52.25 & \cellcolor[RGB]{239, 243, 248}\textbf{60.65} \\ \midrule
\multicolumn{8}{c}{\cellcolor[RGB]{239, 239, 239}Qwen2.5-7B} \\ \midrule
Base model & 6.67 & 3.33 & 52.20 & 27.71 & 25.85 & 20.59 & \cellcolor[RGB]{239, 243, 248}22.73 \\
GRPO & 26.67 & \textbf{26.67} & \underline{ 84.40} & 65.06 & 46.95 & 41.18 & \cellcolor[RGB]{239, 243, 248}48.49 \\
DAPO & 30.00 & \underline{23.33} & 83.40 & \textbf{67.47} & 47.70 & \underline{ 41.91} & \cellcolor[RGB]{239, 243, 248}\underline{ 48.97} \\
DAPO w/ EL & 30.00 & 20.00 & 83.40 & \underline{ 63.86} & 48.14 & \textbf{43.01} & \cellcolor[RGB]{239, 243, 248}48.07 \\
CDE & 23.30 & 20.00 & \textbf{85.00} & \underline{ 63.86} & \underline{ 49.33} & 41.18 & \cellcolor[RGB]{239, 243, 248}47.11 \\ \midrule
\textbf{DiPO (ours)} & \textbf{33.33} & \underline{ 23.33} & \underline{ 84.00} & \textbf{67.47} & \textbf{49.44} & 41.18 & \cellcolor[RGB]{239, 243, 248}\textbf{49.79} \\ \bottomrule
\end{tabular}
}
\label{tab: math maj}
\end{table*}
\subsection{Coefficient Sensitivity Analysis}
\label{sec: app coef}
As reported in Table \ref{tab: coeff}, we analyzed the parameter sensitivity, which demonstrates the distinct robustness characteristics between the proposed DiPO and entropy loss. For the Qwen3-8B-Base model, DiPO with a coefficient of 0.10 achieves the optimal average score of 54.79\%, representing a +1.56 improvement over the DAPO baseline (53.23\%). Notably, even with a tenfold larger coefficient of 1.00, DiPO maintains a stable AVG of 53.36\%, which remains marginally above the baseline. The performance variation across this tenfold coefficient change is only 1.43 AVG points. In contrast, the entropy loss exhibits significantly higher sensitivity; a small coefficient of 0.001 yields a modest gain (AVG 53.90\%, +0.67), increasing it merely to 0.01 causes severe performance degradation, dropping the AVG to 46.00\%—a decrease of 7.90 points and 7.23 points below the baseline. This drastic collapse is consistent across all benchmarks; for instance, on the AMC dataset, performance plummets from 69.87\% to 56.33\%. A similar pattern is observed with the 4B model, where DiPO's performance remains stable between coefficients of 0.10 and 1.00 (AVG 50.75\% vs. 49.66\%), while entropy loss at 0.01 causes the AVG to fall to 42.47. The results indicate that DiPO provides a much wider and more forgiving effective coefficient range, offering reliable performance gains without the risk of catastrophic collapse, thereby presenting a more robust and practical RL strategy.

\begin{table*}[]
\centering
\caption{The impact of the coefficients for DiPO and entropy loss on the results.}
\scalebox{0.9}{
\setlength{\tabcolsep}{3mm}
\begin{tabular}{l|c|c|c|c|c|c|c|c}
\toprule
\textbf{Mathod} & \textbf{coeff} & \textbf{AIME24} & \textbf{AIME25} & \textbf{MATH} & \textbf{AMC} & \textbf{OLY} & \textbf{MIN} & \textbf{AVG} \\ \midrule
\multicolumn{9}{c}{\cellcolor[RGB]{239, 239, 239}Qwen3-4B-Base} \\ \midrule
Baseline & 0.00 & 26.25 & 23.75 & 86.43 & 61.90 & 53.88 & 44.34 & \cellcolor[RGB]{239, 243, 248}49.43 \\ \midrule
 & 0.10 & 29.17 & 24.58 & 87.00 & 64.91 & 54.09 & 44.76 & \cellcolor[RGB]{239, 243, 248}50.75 \\
\multirow{-2}{*}{DiPO} & 1.00 & 25.83 & 24.17 & 86.38 & 62.05 & 54.01 & 45.50 & \cellcolor[RGB]{239, 243, 248}49.66 \\ \midrule
 & 0.001 & 26.67 & 24.58 & 86.78 & 62.95 & 54.53 & 44.53 & \cellcolor[RGB]{239, 243, 248}50.01 \\
\multirow{-2}{*}{DAPO /w EL} & 0.01 & 18.33 & 20.00 & 81.12 & 50.00 & 45.99 & 39.38 & \cellcolor[RGB]{239, 243, 248}42.47 \\ \midrule
\multicolumn{9}{c}{\cellcolor[RGB]{239, 239, 239}Qwen3-8B-Base} \\ \midrule
Baseline & 0.00 & 30.08 & 25.83 & 89.43 & 69.12 & 56.90 & 48.02 & \cellcolor[RGB]{239, 243, 248}53.23 \\ \midrule
 & 0.10 & 35.00 & 27.50 & 89.55 & 71.23 & 57.73 & 47.75 & \cellcolor[RGB]{239, 243, 248}54.79 \\
\multirow{-2}{*}{DiPO} & 1.00 & 32.08 & 24.58 & 88.68 & 70.03 & 56.71 & 48.07        & \cellcolor[RGB]{239, 243, 248}53.36 \\ \midrule
\multicolumn{1}{c|}{} & 0.001 & 33.75 & 25.42 & 89.58 & 69.87 & 57.21 & 47.56 & \cellcolor[RGB]{239, 243, 248}53.90 \\
\multicolumn{1}{c|}{\multirow{-2}{*}{DAPO /w EL}} & 0.01 & 22.50 & 20.42 & 84.18 & 56.33 & 49.49 & 43.06 & \cellcolor[RGB]{239, 243, 248}46.00 \\ \bottomrule
\end{tabular}
}
\label{tab: coeff}
\end{table*}
\subsection{Results of Risk Prediction}
In the contemporary landscape of digital platform management, real-time public opinion risk prediction is a critical capability for safeguarding the operational integrity and brand reputation. This task involves a complex analysis of user-generated content to identify potential crises across various dimensions, such as product experience, fraud, and regulatory compliance. It can be realized through LLMs guided by a specialized prompt that mandates a ``risk-first'' priority to ensure that no negative sentiment is overlooked. As detailed in the \ref{fig: no risk prompt}, the LLM is instructed to maintain strict standards for ``no-risk'' classification, where any ambiguity or negative indicator automatically triggers a risk label. Due to confidentiality reasons, the content within the brackets ``[]'' in the prompt is replaced with placeholders.

To evaluate our approach in risk prediction, we constructed 4000 data (3000 for training and 1000 for test), and conducted a comparative analysis of DiPO and DAPO, trained on Qwen3-8B model \cite{qwen3}. As reported in Table \ref{tab: norisk}, the experimental results reveal that Qwen3-8B achieves only a 52.06\% accuracy, our {DiPO} method achieves state-of-the-art performance with the highest {Accuracy (78.37\%)}, {Recall (79.49\%)}, and {F1 Score (86.84\%)}. Although the DAPO model maintains a slightly higher precision of 95.96\%, DiPO's superior recall is particularly vital in an emergency analysis context, as it minimizes the likelihood of missing critical risks while still maintaining an exceptionally high precision of 95.69\%. This balanced performance demonstrates that DiPO is the most robust and reliable framework for automated public opinion monitoring, providing an effective tool for identifying and mitigating potential threats in a complex information ecosystem.

\begin{figure*}[h]
  \centering
   \includegraphics[width=1\linewidth]{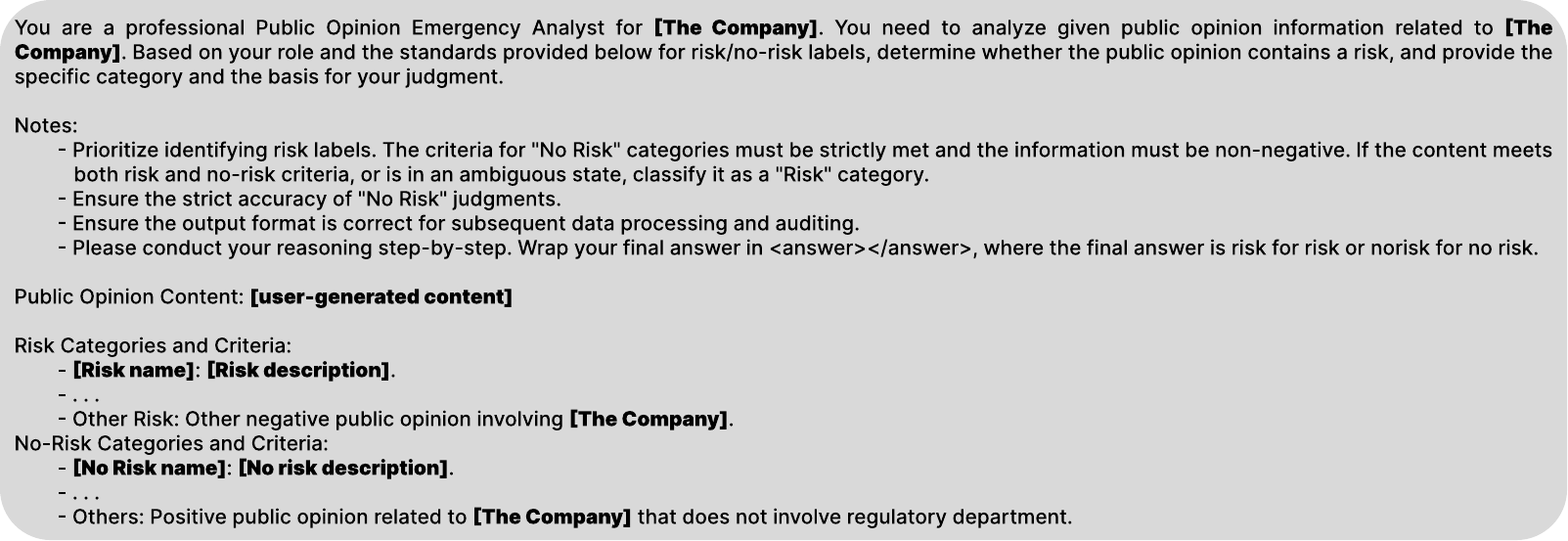}
    \caption{Prompt of risk prediction.}
    \label{fig: no risk prompt}
\end{figure*}

\begin{table*}[h]
\centering
\caption{The results of risk prediction, including accuracy, precision, recall, and F1 score, and all results are the mean of 8 independent inferences.}
\scalebox{1}{
\setlength{\tabcolsep}{3mm}
\begin{tabular}{l|c|c|c|c}
\toprule
\textbf{Method}   & \textbf{ACC/mean@8}     & \textbf{Precision/mean@8} & \textbf{Recall/mean@8}  & \textbf{F1/mean@8}      \\ \midrule
Qwen3-8B & 52.06          & 87.18            & 54.64          & 67.18          \\
DAPO     & 76.94          & \textbf{95.96}   & 77.58          & 85.80          \\
DiPO     & \textbf{78.37} & 95.69            & \textbf{79.49} & \textbf{86.84} \\ \bottomrule
\end{tabular}
}
\label{tab: norisk}
\end{table*}

\subsection{Visualization of PPL Distribution}
\label{sec: app ppl}
To further analyze the exploration and exploitation trends during the RL training process, we conducted a visualization analysis of the PPL distribution during the training of DAPO and DiPO. As shown in Figure \ref{fig: ppl distribution step}, during the initial training stage, the PPL distributions of DAPO and DiPO are relatively consistent, and it is difficult to distinguish the PPL distributions of correct and error samples, which is the \textbf{motivation for introducing advantage judgment} in PSD. In the later stages of training, the PPL distributions of DAPO and DiPO gradually converge. The difference is that for DAPO, the overall PPL will converge to a lower range (whether it is error samples or correct samples), while PPL distribution of DiPO is more discriminative, with error samples in a higher PPL range and correct samples in a lower PPL range, reflecting a reasonable tendency for exploration and exploitation.

\subsection{Case Analysis}
Figure \ref{fig: correct DiPO} \ref{fig: correct DAPO} \ref{fig: error DiPO} \ref{fig: error DAPO} show some cases of extreme groups for DAPO and DiPO trained to the 500-th step on DAPO-17K, where darker colors indicate higher entropy. It can be seen that under correct answers, the overall entropy of DiPO cases is smaller, and under incorrect answers, the density of high-entropy tokens in DiPO is greater, which also reflects the effective exploitation and exploration balance of DiPO.

\begin{figure*}[]
  \centering
   \includegraphics[width=1\linewidth]{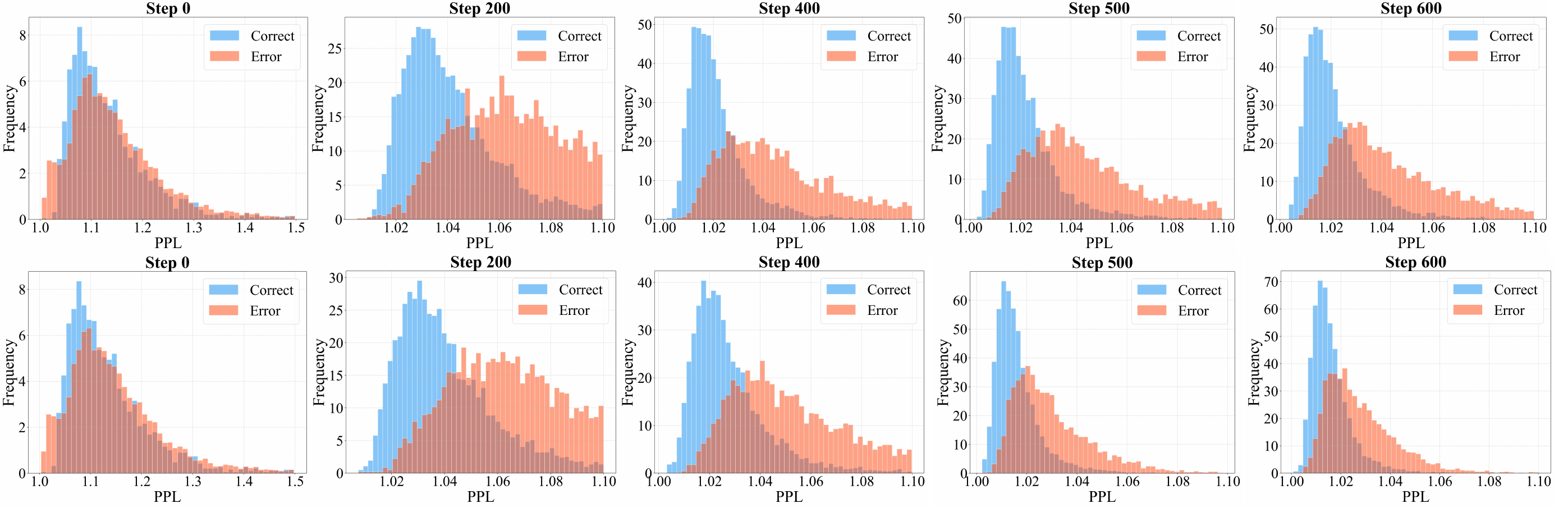}
    
    \caption{PPL distribution of correct and error samples for Qwen3-8B-Base trained on DAPO-17K Dataset via DiPO (TOP) and DAPO (Bottom).}
    \label{fig: ppl distribution step}
\end{figure*}

\begin{figure*}[]
  \centering
   \includegraphics[width=1\linewidth]{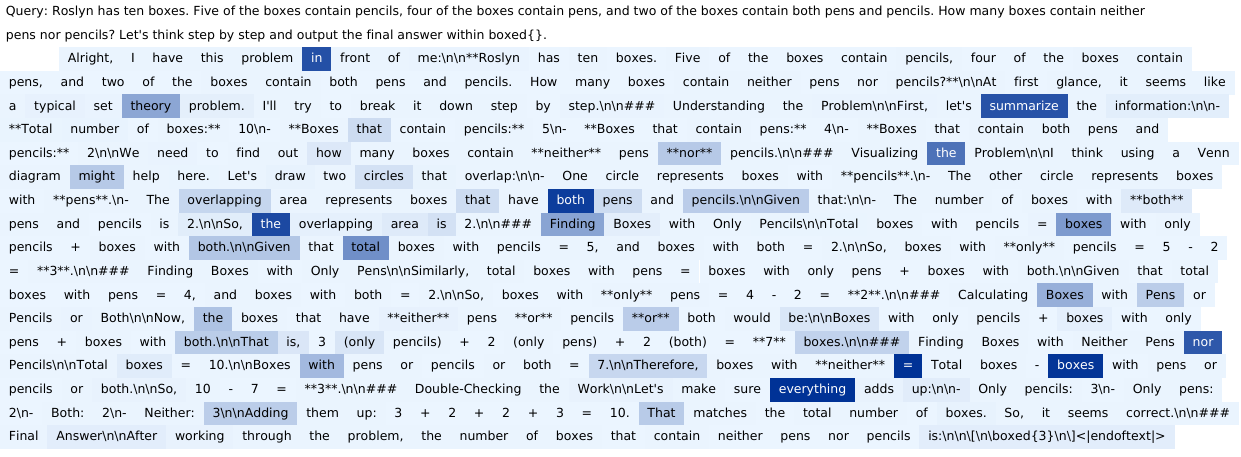}
    \caption{Correct case of DAPO.}
    \label{fig: correct DAPO}
\end{figure*}

\begin{figure*}[]
  \centering
   \includegraphics[width=1\linewidth]{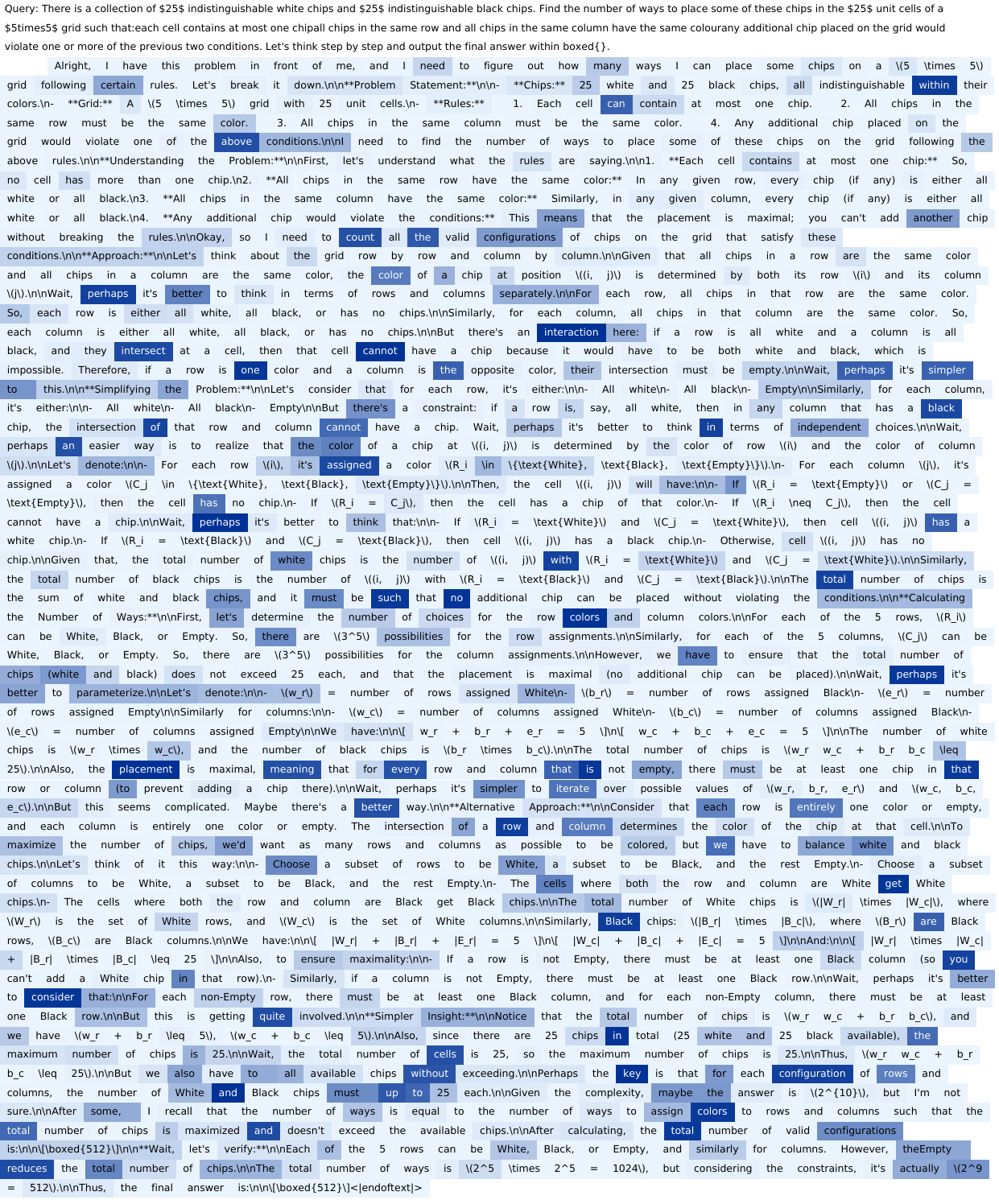}
    \caption{Error case of DAPO.}
    \label{fig: error DAPO}
\end{figure*}

\begin{figure*}[t!]
   \includegraphics[width=1\linewidth]{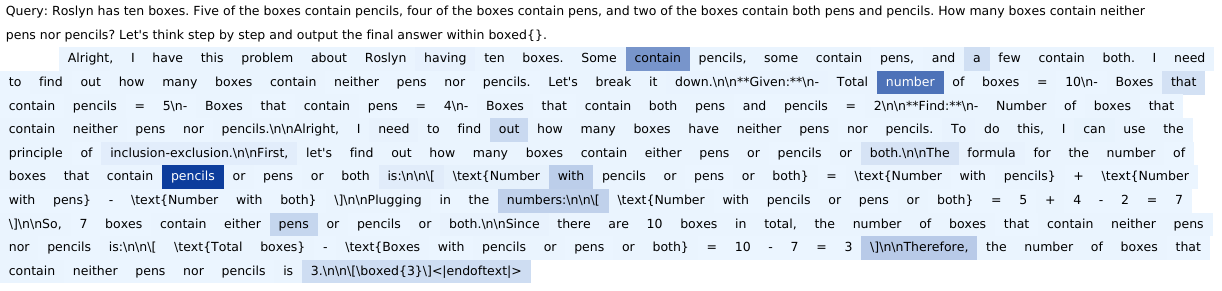}
    \caption{Correct case of DiPO.}
    \label{fig: correct DiPO}
\end{figure*}

\begin{figure*}[]
  \centering
   \includegraphics[width=1\linewidth]{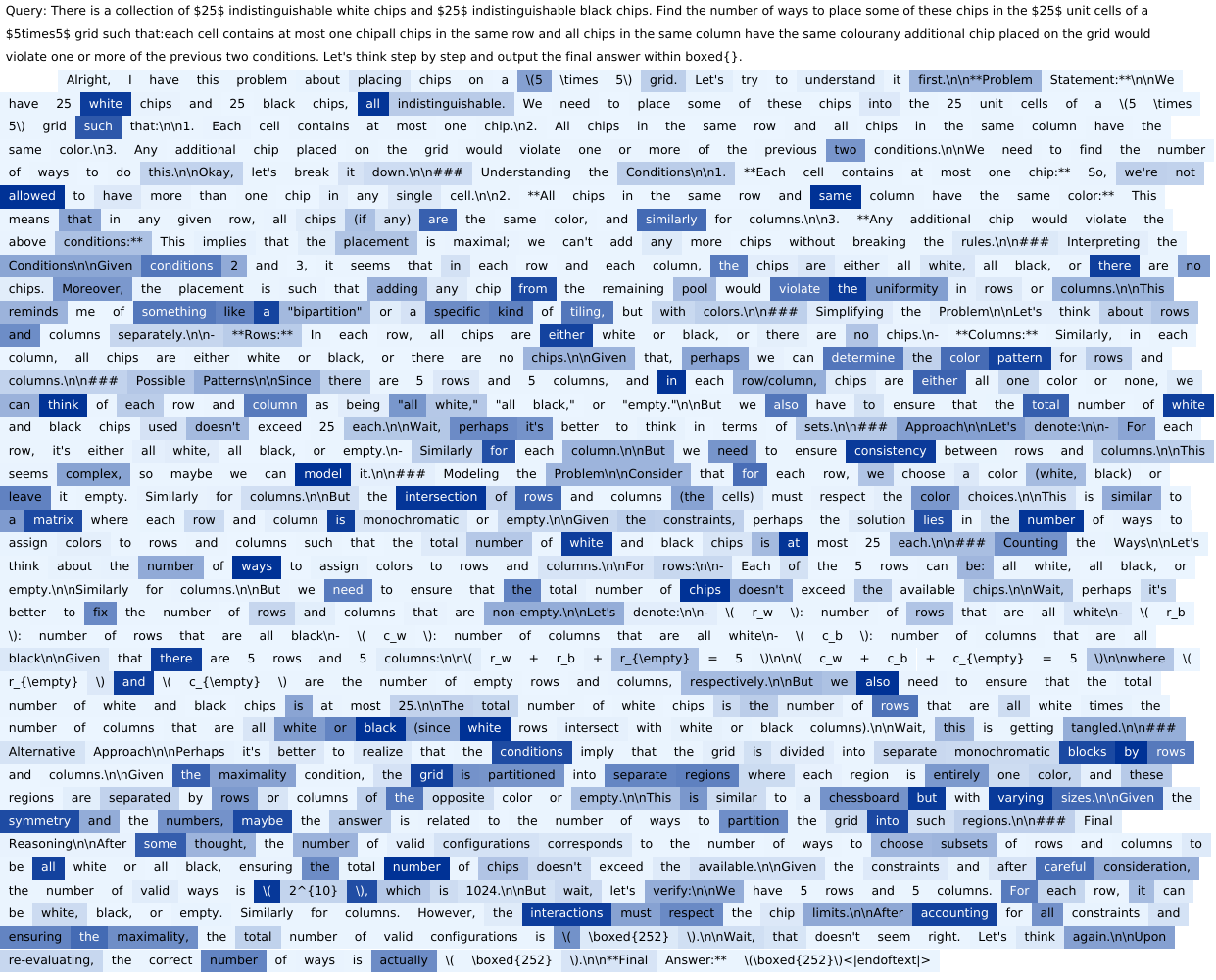}
    \caption{Error case of DiPO.}
    \label{fig: error DiPO}
\end{figure*}


\newpage

\end{document}